\pdfoutput=1
\documentclass[10pt]{article}

\PassOptionsToPackage{sort&compress}{natbib}

\usepackage{thuidm}
\setcitestyle{numbers,square,comma}

\usepackage{mathtools}
\usepackage{mathrsfs}
\usepackage{eucal}          
\usepackage{nicefrac}
\usepackage{multirow}
\usepackage{multicol}
\usepackage{colortbl}
\usepackage{wrapfig}
\usepackage{float}
\usepackage[ruled,vlined]{algorithm2e}

\definecolor{myhighlight}{RGB}{220,240,255}

\title{Bridging Risk Approximation Gaps in Model Predictive\\
       Task Sampling via In-Context Modeling}

\author{%
  \mbox{Jiarong Wen\aff{1,$\dagger$}}, \mbox{Qi Tao\aff{1,$\dagger$}},
  \mbox{Kaiyu Zhang\aff{1,$\dagger$}}, \mbox{Yun Qu\aff{2}}, \mbox{Lizhou Cai\aff{2}},\\
  \mbox{Heming Zou\aff{2}}, \mbox{Yuhang Jiang\aff{2}}, \mbox{Qi Wang\aff{2}}%
}

\affiliation{%
  \aff{1}College of Science, National University of Defense Technology\quad
  \aff{2}Department of Automation, Tsinghua University\\[1.5pt]
  \aff{$\dagger$}Equal contribution\\[4pt]
  {\sffamily\bfseries\color{thupurpledark}Accepted at NeurIPS 2026}%
}

\datestamp{October 10, 2026}
\correspondence{Qi Wang at
  \href{mailto:cheemswang@mail.tsinghua.edu.cn}{cheemswang@mail.tsinghua.edu.cn}}

\labwebsite[www.thuidm.com]{https://www.thuidm.com/}

\begin{document}

\begin{abstract}
Reinforcement learning (RL) is essential for adaptive decision-making, e.g., robust Meta-RL and domain randomization (DR), and efficient post-training of large language models (LLMs), where tasks are commonly structured as Markov decision processes or prompts.
A central bottleneck in all these settings is the cost of agent-environment interactions required for task selection and policy optimization.
Model predictive task sampling (MPTS) has emerged as a promising approach to improve sample efficiency by querying informative tasks via a risk predictive model (RPM) trained on optimization history.
However, existing RPMs suffer from two failure modes: (i) sparse optimization histories, which limit RPMs' generalization across the task space, and (ii) a one-step lag in risk approximation, which causes biased difficulty estimation as the policy evolves.
This work formalizes these failure modes, derives lagged-risk decompositions that expose the resulting temporal bias, and recasts risk prediction as an in-context sequence modeling problem.
The developed RPM integrates a one-step look-ahead mechanism with a temporal-weighted sliding window to mitigate data non-stationarity and scarcity, without requiring additional rollouts.
Empirically, our approach yields more reliable difficulty estimates and consistent performance gains across robust Meta-RL, DR, and prompt curriculum RL post-training of LLMs.
\end{abstract}

\maketitle

\section{Introduction}
Over the past decade, deep reinforcement learning (RL) has achieved remarkable progress in complex sequential decision-making~\citep{sutton2018reinforcement,mnih2015human,hokoff,cfcql,wang2022model}. 
This work focuses on two settings where RL operates over structured task distributions. 
(i) In robotics and physical control, agents must adapt to new Markov decision processes (MDPs) in zero-shot or few-shot scenarios under safety constraints~\citep{carpin2016risk,rempe2022generating}. 
Robust domain randomization (DR)~\citep{muratore2018domain} and robust meta-reinforcement learning (Meta-RL)~\citep{finn2017model} address this by optimizing over MDP distributions and prioritizing risky environments. 
(ii) In foundation model post-training, RL has become central to incentivizing reasoning in large language models (LLMs)~\citep{ouyang2022training,achiam2023gpt}, where identifying informative prompts and rollouts is critical for stable optimization~\citep{yu2025dapo,qu2026can,gao2026prompt,gps}. 
Optimization over prompts further requires policies to explore complex logical search spaces through long-horizon token generation.

\paragraph{Evaluation Bottlenecks in Task Prioritization.}
Despite their different objectives, both settings share the same bottleneck: expensive policy evaluation. 
In robust DR and Meta-RL, assessing task risk requires extensive agent-environment trajectories on selected tasks~\citep{haarnoja2018soft,wang2024simple,lv2024theoretical,wang2025robust}; in LLM post-training, each update requires long Chain-of-Thought (CoT) rollouts~\citep{wei2023cot}, making prompt difficulty evaluation costly~\citep{yu2025dapo,qu2026can,lpo}. 
Thus, exhaustive task evaluation is impractical, motivating efficient task prioritization.

\paragraph{Conditional Agent-Environment Interactions via Model Predictive Strategies.}
Model predictive task sampling (MPTS)~\citep{wang2025modelpredictivetasksampling} and its variants~\citep{qu2026can,gao2026prompt,qu2025fast,dps} reduce unnecessary interactions by using optimization histories, e.g., task or prompt identifiers paired with losses, to train an RPM that approximates task risk online. 
This enables \textit{conditional interactions}: policy updates target selected tasks without exhaustive rollouts. 
However, RPM fidelity remains underexplored despite directly governing task selection and policy evolution. 
Vanilla RPMs~\citep{wang2025modelpredictivetasksampling,qu2025fast} suffer from two key limitations: (i) \textit{temporal lag bias}, where outdated policy snapshots misrepresent the current risk landscape; and (ii) \textit{sparse-history generalization}, where limited interaction data restricts generalization across the task space. 
Analogous issues appear in prompt curriculum learning for LLM post-training, e.g., PCL~\citep{gao2026prompt}, where temporal lag and sparse evaluations distort curricula and reduce sample efficiency.

\paragraph{Enhancing Risk Predictive Models via In-Context Sequence Modeling.}
We analyze vanilla RPMs and quantify extrapolation errors caused by temporal lag and data scarcity under small task batches. 
To address these issues, we propose two remedies: (i) a \textit{one-step look-ahead mechanism} that conditions risk prediction on recent optimization signals to reduce extrapolation bias; and (ii) a \textit{temporal-weighted sliding window} that augments training with recent task trajectories while exponentially downweighting stale data. 
Together, they instantiate an in-context model of the evolving task risk function, improving prediction under non-stationary policy dynamics with negligible additional rollout cost.

\paragraph{Primary Contributions and Empirical Findings.}
Focusing on RPM design for the MPTS family, our contributions are two-fold:
\begin{enumerate}
    \item \textbf{Theoretical Analysis of MPTS Vulnerabilities:}
    We derive a diagnostic lagged-risk decomposition that exposes finite-sample reconstruction error and a policy-update-induced $\mathcal O(\eta)$ mismatch, explaining why interpolation-based RPMs can destabilize task-risk rankings.
    \item \textbf{In-Context Modeling of the Evolving Risk Function:} We introduce in-context RPM (ICRPM) to characterize optimization histories, with one-step look-ahead prediction as the core design and a temporal-weighted sliding window as lightweight data reuse to reduce extrapolation bias and alleviate data scarcity without modifying the policy optimization loop.
\end{enumerate}
Experiments within the MPTS framework show that ICRPM improves sample efficiency and tail-risk robustness on robust DR and Meta-RL benchmarks, and accelerates curriculum-based RL post-training for LLMs, demonstrating better generalization across continuous-control and LLM reasoning tasks. 

\section{Research Background}

\paragraph{Notation and Setup.}
We consider a unified task space $\mathcal{T}$ that encompasses both continuous control and LLM post-training regimes. Each task $\boldsymbol{\tau} \in \mathcal{T}$ is formalized as a MDP \citep{sutton2018reinforcement}. In DR and Meta-RL, $\boldsymbol{\tau}$ is represented as a continuous descriptor in $\mathbb{R}^d$, whereas in reinforcement learning with verifiable rewards (RLVR), $\boldsymbol{\tau}$ corresponds to a discrete reasoning prompt.  
At optimization step $t$, the agent is parameterized by $\boldsymbol{\theta}_t \in \Theta$, representing either a control policy or an autoregressive model $\pi_{\boldsymbol{\theta}_t}$.

Over the course of training adaptive decision-makers or RL finetuning LLMs, we sample a batch of tasks $\mathcal{T}_t^{B} = \{\boldsymbol{\tau}_{t,i}\}_{i=1}^{B} \subset \mathcal{T}$ for the agent to interact.
Let $\mathcal{L}:\Theta\times\mathcal{T}\to\mathbb{R}$ denote the policy-dependent risk, and define 
$\ell_t(\boldsymbol{\tau}):=\mathcal{L}(\boldsymbol{\theta}_t,\boldsymbol{\tau})$ as the risk of policy $\pi_{\boldsymbol{\theta}_t}$ on task $\boldsymbol{\tau}$ at iteration $t$. 
Concretely, $\ell_t(\boldsymbol{\tau})$ can be the negative episodic return in control settings,
$\ell_t(\boldsymbol{\tau})=-\mathbb{E}_{\pi_{\bm\theta_t}}\!\left[\sum_{i=0}^{H-1}\bar{\lambda}^{i}r_{t,i}^{\boldsymbol{\tau}}\right]$
under post-adapted policies $\pi_{\bm\theta_t}$, or the expected penalty in RLVR, e.g.,
$\ell_t(\boldsymbol{\tau}) = 1 - \mathbb{E}[r_t^{\boldsymbol{\tau}}]$.
In the MPTS setup \citep{wang2025modelpredictivetasksampling,qu2025fast,qu2026can,gao2026prompt}, the per-iteration optimization batch is $H_t = \{(\boldsymbol{\tau}_{t,i}, 
\ell_{t}(\bm{\tau}_{t,i}))\}_{i=1}^{B}$, and the accumulated form $H_{0:t} 
= \bigcup_{k=0}^{t} H_k$ tracks the temporal optimization history. 
We further denote the space of complete historical 
trajectories by $\mathcal{H}$ and define the RPM as a functional 
$\hat{\ell}_t : \mathcal{T} \times \mathcal{H} \rightarrow \mathbb{R}$, which approximates the non-stationary task risk landscape conditioned on $H_{0:t}$.

\paragraph{General MPTS Pipeline.}
MPTS~\citep{wang2025modelpredictivetasksampling,qu2025fast,gao2026prompt}
executes a \textit{predict-then-select} pipeline, which can instantiate both
active prompt selection and robust task sampling. 
For prompt curriculum learning in RLVR~\citep{qu2026can,gao2026prompt},
the RPM predicts the sampled success rate $\hat{\rho}_{\bm{\tau}}^t$, and prompts
with intermediate difficulty, i.e., those close to a target threshold $\rho^*$, typically
around $0.5$, are preferred~\citep{bae2026online}. 
For robust task sampling, the same
rule is applied to the predicted risk by replacing
$\hat{\rho}_{\bm{\tau}}^t$ with $\hat{\ell}_t(\bm{\tau})$ and setting
$\rho^*=\max_{\bm{\tau}\in\mathcal{T}_{t}^{\hat{B}}}\hat{\ell}_t(\bm{\tau})$,
which encourages near-worst-case task selection.

At each iteration, MPTS proceeds in three steps:
\textcircled{1} \emph{Online RPM update.} The RPM is refined with the latest
history $H_{t-1}$ by maximizing the likelihood of risk prediction~\citep{gao2026prompt} or its approximation~\citep{wang2025modelpredictivetasksampling}, e.g., the ELBO in Eq.~\eqref{elbo}.
\textcircled{2} \emph{Amortized evaluation.}  Let $\hat{B}$ denote candidate-pool size, a candidate pool
$\mathcal{T}_{t}^{\hat{B}}\sim p(\bm{\tau})$ is sampled, and the forward
passes through the RPM produce predictive risk for all candidate tasks.
\textcircled{3} \emph{Subset selection.} The training task batch is constructed by
selecting $B$ tasks with predicted risk closest to the target threshold:
\begin{subequations}
\label{eq:mpts_pipeline}
\begin{align}
\text{Amortized Train RPM:}\quad
&\max_{\bm{\psi}\in\Psi,\,\bm{\phi}\in\Phi}
\mathcal{L}_{\mathrm{RPM}}(\bm{\psi},\bm{\phi};H_{t-1}),
\\
\text{Online Task Evaluation:}\quad
&p_{\bm{\psi}}\!\left(
\ell \mid \hat{\bm{\tau}}_i,H_{t-1};\bm{\theta}_t
\right)
\xrightarrow{\mathrm{MC}}
\left\{
\hat{\rho}_{\hat{\bm{\tau}}_i}^{t},\,m(\ell_i),\,\sigma(\ell_i)
\right\}_{i=1}^{\hat{B}},
\label{eq:mpts_pipeline_b}
\\
\text{Active Subset Selection:}\quad
&\mathcal{T}_{t}^{B}
=
\operatorname*{arg\,top-}B_{\bm{\tau} \in \mathcal{T}_{t}^{\hat{B}}}
\left(
-\big\| \hat{\rho}_{\bm{\tau}}^{t} - \rho^* \big\|_2^2
\right)
\end{align}
\end{subequations}
In Eq.~\eqref{eq:mpts_pipeline_b}, $m(\ell_i)$ and $\sigma(\ell_i)$ are the predictive mean and standard deviation of the task outcome estimated via Monte Carlo sampling, which are then mapped to the acquisition scalar $\hat{\rho}_{\hat{\bm{\tau}}_i}^{t}$ to quantify task priority.
With the queried batch in Eq.~(\ref{eq:mpts_pipeline}c), the policy model $\pi_{\bm\theta}$ is accordingly updated in the next iteration, e.g., $\bm\theta_{t+1}=\bm\theta_{t}-\eta\frac{1}{B}\sum_{b=1}^{B}\nabla_{\bm{\theta}}\ell_{t}(\bm\tau_b)$ in DR.
By repeating these steps for $T$ rounds, MPTS achieves either prompt curriculum learning in LLMs or robust Meta-RL and DR.

\paragraph{RPM and Variational Inference.}
Existing MPTS frameworks typically instantiate the RPM as a VAE-based latent generative predictor~\citep{kingma2013auto}, where a latent state $\mathbf{z}_t$ summarizes historical task-risk observations. 
Using streaming variational inference~\citep{variationalinference,rezende2014stochastic}, the RPM maximizes the per-step ELBO:
\begin{equation}
\label{elbo}
\mathcal{G}_{\text{ELBO}}(\boldsymbol{\psi}, \boldsymbol{\phi})
= \mathbb{E}_{q_{\boldsymbol{\phi}}(\mathbf{z}_t \vert H_t)}
\left[ \sum_{i=1}^{B} \log p_{\boldsymbol{\psi}}(\ell_{t}(\bm{\tau}_{t,i}) \vert \boldsymbol{\tau}_{t,i}, \mathbf{z}_t) \right]
- \beta\mathrm{KL}\left(
q_{\boldsymbol{\phi}}(\mathbf{z}_t \vert H_t)
||
q_{\bar{\boldsymbol{\phi}}}(\mathbf{z}_t \vert H_{t-1})
\right),
\end{equation}
where $\bar{\boldsymbol{\phi}}$ represents the parameters from the previous iteration, kept fixed to provide a stable prior.
For task evaluation, RPMs use Monte Carlo forward passes to estimate
$
p(\ell \vert \boldsymbol{\tau}, H_{0:t}; \boldsymbol{\theta}_t)
\approx
\mathbb{E}_{q_{\boldsymbol{\phi}}(\mathbf{z}_t \vert H_t)}
\left[
p_{\boldsymbol{\psi}}(\ell \vert \boldsymbol{\tau}, \mathbf{z}_t; \boldsymbol{\theta}_t)
\right].
$
These predictions are then used by the acquisition rule in Eq.~(\ref{eq:mpts_pipeline}c) for robust~\citep{wang2025modelpredictivetasksampling,qu2025fast} or curriculum optimization~\citep{qu2026can,gao2026prompt}. 
Thus, vanilla RPMs rely on generalization at $\bm\theta_t$ to approximate task difficulty after the policy update to $\bm\theta_{t+1}$, creating the temporal mismatch analyzed next.

\section{From Theoretical Investigation to Practical Enhancement}

\begin{figure}[t]
    \centering
    \begin{minipage}[c]{0.53\textwidth}
        \centering
        \includegraphics[width=\linewidth]{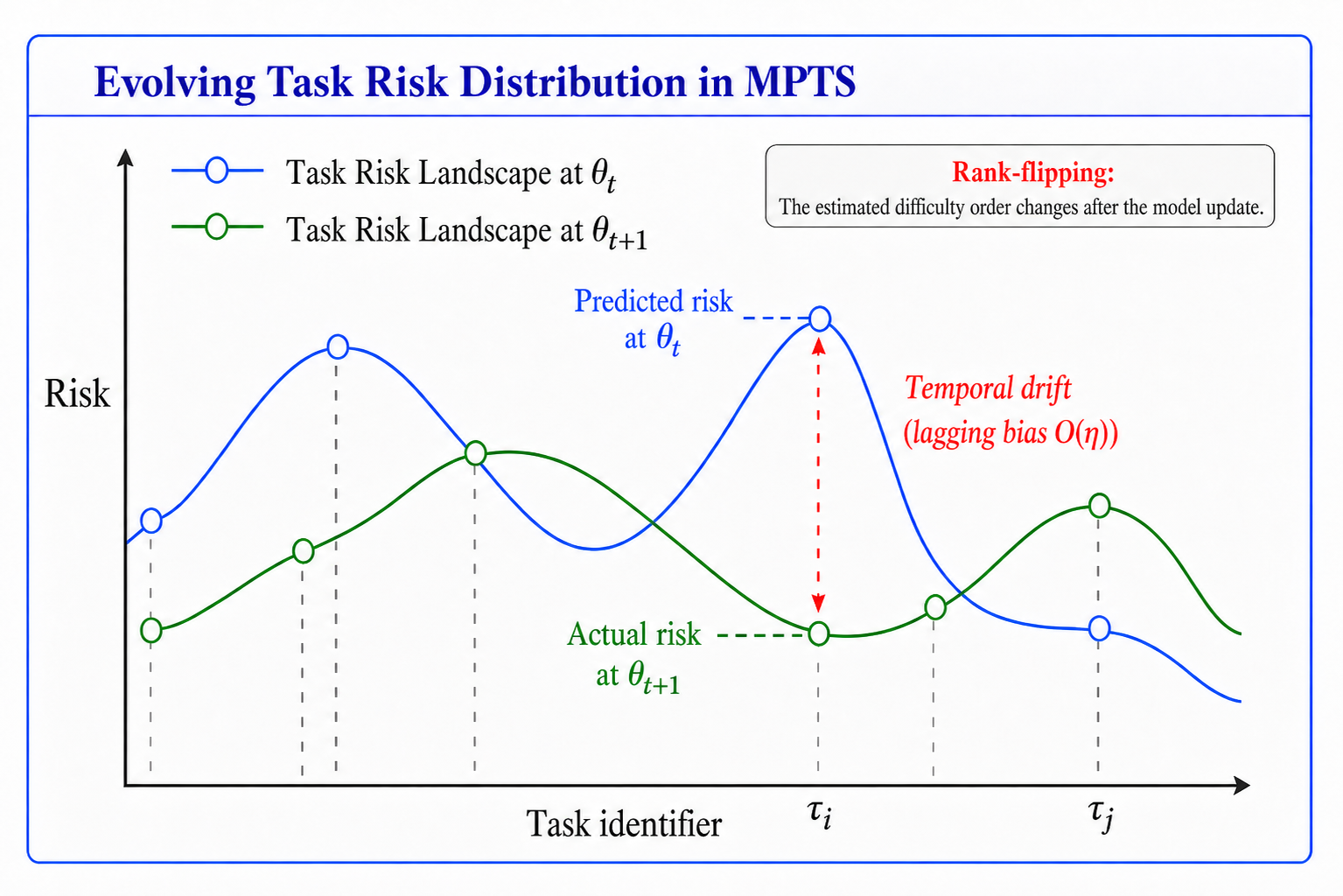}
    \end{minipage}\hfill
    \begin{minipage}[c]{0.43\textwidth}
        \caption{\textbf{Task Risk Distribution Shift over Iterations.}
        Using the risk generalization results at $\bm\theta_{t}$ tends to cause the
        \textit{rank-flipping} at $\bm\theta_{t+1}$, i.e.,
        $\text{sgn}[\ell_{t}(\bm\tau_i)-\ell_{t}(\bm\tau_j)]\neq
         \text{sgn}[\ell_{t+1}(\bm\tau_i)-\ell_{t+1}(\bm\tau_j)]$.}
        \label{fig:mpts_lag}
    \end{minipage}
\end{figure}

This section analyzes the theoretical bottlenecks of RPMs, introduces the In-Context Predictive Model (ICRPM), and discusses its implementation within MPTS.

\subsection{Vulnerabilities of Vanilla RPM}
\label{sec:theory_vanilla}
Although MPTS provides a sample-efficient surrogate for task risk estimation, the vanilla RPM overlooks a common failure mode, i.e., the continual evolution of both the policy and the task risk landscape induces rank-flipping (Fig.~\ref{fig:mpts_lag}) in task risk that cannot be corrected post-hoc.

\paragraph{Instability of Task Risk Rankings.}
MPTS relies on accurate task-risk ranking, but task risk is policy-dependent: after updating from $\bm\theta_t$ to $\bm\theta_{t+1}$, the ordering induced by $\ell_t(\boldsymbol{\tau})$ may differ from that of $\ell_{t+1}(\boldsymbol{\tau})$. 
As shown in Fig.~\ref{fig:mpts_lag}, this policy-induced drift creates a non-stationary difficulty landscape. 
Prior analysis~\citep{wang2025modelpredictivetasksampling} shows that ranking stability requires sufficiently small learning rates and smooth loss landscapes, conditions often violated in practice. 
Hence, even a perfectly accurate RPM at $\bm\theta_t$ may fail to preserve the correct ranking at $\bm\theta_{t+1}$, which we call the \textit{task risk distribution shift}.

\paragraph{Lagged Risk Prediction and Reconstruction Bound.}
The vanilla RPM, a VAE-based risk generative model~\citep{wang2025modelpredictivetasksampling,qu2025fast}, is trained on the latest history batch $H_{t}$ of size $B$ in each per-step update, where supervision targets are evaluated at the latest yet lagged parameter $\bm\theta_{t}$. 
The induced structured risk distribution shift cannot be resolved by increasing sample size alone: 
even under standard capacity and smoothness assumptions, the prediction error includes a temporal mismatch term induced by the policy update from $\bm\theta_t$ to $\bm\theta_{t+1}$.

\begin{theorem}[Per-step Lagged Prediction Bound for Vanilla RPMs] 
\label{thm:vanilla_bound} 
Assume that $\ell(\bm{\tau})=\mathcal{L}(\bm{\theta},\bm{\tau})\in[0,M]$ for all
$\bm{\theta}\in\Theta$, $\bm{\tau}\in\mathcal T$, and that
$\ell(\bm{\tau})$ is $L$-Lipschitz in $\bm{\theta}$. The policy update uses a normalized average loss computed by i.i.d. samples and
$\|\nabla_{\bm{\theta}}\ell_t(\bm{\tau}_i)\|\le G$.
Let $\hat\ell_{\bm{\psi},\bm{\phi},t}(\bm{\tau})$ denote the predictive risk
function induced by the vanilla RPM at iteration $t$. Define
$
    \epsilon_{\mathrm{app}}^{\mathrm{rec}}:=
    \inf_{\bm{\psi},\bm{\phi}}
    \mathbb E_{\bm{\tau}\sim p}
    \bigl[
        \bigl(\hat\ell_{\bm{\psi},\bm{\phi},t}(\bm{\tau})
        -\ell_t(\bm{\tau})\bigr)^2
    \bigr].
$
Then, for any $\delta\in(0,1)$, with probability at least $1-\delta$ over the
sampling of $H_t$,
\begin{equation}
\label{eq:vanilla_bound}
    \mathbb E_{\bm{\tau}\sim p}
    \bigl[|\hat\ell_t(\bm{\tau})-\ell_{t+1}(\bm{\tau})|\bigr]
    \le
    \underbrace{
    \sqrt{\epsilon_{\mathrm{app}}^{\mathrm{rec}}
    +\epsilon_{\mathrm{gen}}^{\mathrm{rec}}(B,\delta)}
    }_{\textcircled{\small{1}}\text{Reconstruction Approximation Error}}
    \qquad
    +
    \underbrace{
    L\eta G
    }_{\textcircled{\small{2}}\text{Temporal Lagging Error}} .
\end{equation}
If the reconstruction loss class
$
    \mathcal G^{\mathrm{rec}}
    =
    \{(\bm{\tau},y)\mapsto
    (\hat\ell_{\bm{\psi},\bm{\phi},t}(\bm{\tau})-y)^2:
    \bm{\psi}\in\Psi,\bm{\phi}\in\Phi\}
$
has Rademacher complexity $\mathfrak R_B(\mathcal G^{\mathrm{rec}})$ and is bounded by $M^2$, then
$\epsilon_{\mathrm{gen}}^{\mathrm{rec}}(B,\delta)=4\mathfrak R_B(\mathcal G^{\mathrm{rec}})+2M^2\sqrt{\frac{\ln(2/\delta)}{2B}}$. 
\end{theorem}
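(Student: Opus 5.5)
The plan is to split the error with the triangle inequality at the intermediate quantity $\ell_t$:
\[
|\hat\ell_t(\bm{\tau})-\ell_{t+1}(\bm{\tau})|\le|\hat\ell_t(\bm{\tau})-\ell_t(\bm{\tau})|+|\ell_t(\bm{\tau})-\ell_{t+1}(\bm{\tau})|.
\]
Taking $\mathbb E_{\bm{\tau}\sim p}$ of both sides, we bound the two resulting terms separately. The first becomes the reconstruction term \textcircled{1} and the second the lag term \textcircled{2}.

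\textbf{Lag term \textcircled{2}.} By the $L$-Lipschitz assumption in $\bm\theta$, $|\ell_t(\bm{\tau})-\ell_{t+1}(\bm{\tau})|\le L\|\bm\theta_{t+1}-\bm\theta_t\|$ uniformly in $\bm{\tau}$. The update is $\bm\theta_{t+1}-\bm\theta_t=-\eta\frac1B\sum_b\nabla_{\bm\theta}\ell_t(\bm\tau_b)$, an average of vectors each of norm at most $G$. By the triangle inequality its norm is at most $\eta G$, which gives $L\eta G$ deterministically for every $\bm\tau$ and every realization of the batch. This step holds on the whole probability space, so it costs no failure probability.

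\textbf{Reconstruction term \textcircled{1}.} By Jensen (or Cauchy--Schwarz), $\mathbb E|\hat\ell_t-\ell_t|\le\sqrt{\mathbb E(\hat\ell_t-\ell_t)^2}$. It then suffices to bound the population squared risk $R(\bm\psi,\bm\phi)=\mathbb E_{\bm{\tau}\sim p}[(\hat\ell_{\bm\psi,\bm\phi,t}(\bm{\tau})-\ell_t(\bm{\tau}))^2]$ of the fitted RPM by $\epsilon^{\mathrm{rec}}_{\mathrm{app}}+\epsilon^{\mathrm{rec}}_{\mathrm{gen}}(B,\delta)$. I would treat the RPM fit as (approximate) empirical risk minimization of the squared reconstruction loss on $H_t$, whose $B$ pairs $(\bm\tau_{t,i},\ell_t(\bm\tau_{t,i}))$ are i.i.d. with $\bm\tau\sim p$. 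Writing $\hat R$ for the empirical risk, $\hat f$ for the fitted model and $f^\star$ for an (approximate) minimizer of $R$, the standard decomposition is
\[
R(\hat f)-R(f^\star)\le[R(\hat f)-\hat R(\hat f)]+[\hat R(f^\star)-R(f^\star)],
\]
since $\hat R(\hat f)\le\hat R(f^\star)$ by ERM.

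For the first bracket, a uniform deviation bound over $\mathcal G^{\mathrm{rec}}$ is needed. Symmetrization gives $\mathbb E\sup_g(\mathbb E g-\hat{\mathbb E}g)\le 2\mathfrak R_B$. McDiarmid with range $M^2$ then holds with probability $1-\delta/2$ and adds $M^2\sqrt{\ln(2/\delta)/(2B)}$. The second bracket involves a single fixed function, so it needs only Hoeffding, or it can be absorbed into another uniform bound at confidence $\delta/2$. A union bound combines the two events. Using the crude two-sided uniform bound for both brackets gives exactly $4\mathfrak R_B(\mathcal G^{\mathrm{rec}})+2M^2\sqrt{\ln(2/\delta)/(2B)}$, matching the stated $\epsilon^{\mathrm{rec}}_{\mathrm{gen}}$. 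Since $\epsilon^{\mathrm{rec}}_{\mathrm{app}}$ is an infimum, I would take $f^\star$ to be $\varepsilon$-optimal and let $\varepsilon\to0$.

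The main obstacle is justifying this ERM/i.i.d. idealization for a VAE trained by the ELBO with a KL prior term, where $\hat\ell_t$ is a Monte Carlo predictive mean rather than a directly minimized regressor. I would handle this by stating explicitly the assumption, implicit in the theorem, that the induced predictor $\hat\ell_t$ minimizes (or is treated as minimizing) the empirical squared reconstruction loss within the class indexed by $(\bm\psi,\bm\phi)$, with any optimization gap folded into $\epsilon^{\mathrm{rec}}_{\mathrm{app}}$. A related subtlety is the boundedness of the loss class by $M^2$, which needs $\hat\ell\in[0,M]$; I would enforce this by clipping predictions to $[0,M]$, which can only decrease the error against targets in $[0,M]$.

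Finally, the square root of the squared-risk bound is added to the deterministic lag term to obtain \eqref{eq:vanilla_bound}.
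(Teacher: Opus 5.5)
Your proposal is correct and follows essentially the same route as the paper. You split at $\ell_t$, bound the lag by $L\|\bm\theta_{t+1}-\bm\theta_t\|\le L\eta G$, and control the squared reconstruction risk via ERM plus a two-sided Rademacher uniform-convergence bound applied to both the empirical minimizer and a population (near-)minimizer, yielding $4\mathfrak R_B(\mathcal G^{\mathrm{rec}})+2M^2\sqrt{\ln(2/\delta)/(2B)}$. The differences are minor: you use the pointwise triangle inequality before Jensen, whereas the paper uses Jensen and then the $L_2$ triangle inequality, and you are more explicit about the $\delta/2$ union bound and the infimum, whereas the paper justifies the ERM idealization via a fixed-variance Gaussian decoder with the KL term absorbed into the error.
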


Theorem~\ref{thm:vanilla_bound} decomposes the lagged prediction error of vanilla RPMs into \textcircled{\small{1}} finite-sample reconstruction and approximation error from sparse optimization histories, and \textcircled{\small{2}} policy-update-induced temporal mismatch from applying risk estimates at $\bm{\theta}_t$ to task selection under $\bm{\theta}_{t+1}$.
Thus, interpolation-based vanilla RPMs~\citep{wang2025modelpredictivetasksampling,gao2026prompt} incur an $\mathcal{O}(\eta)$ lagging bias, which may distort task-risk rankings and weaken task-space generalization.

\subsection{Context-based Extrapolative Risk Prediction}
Having identified temporal misalignment and data scarcity as the main limitations, we introduce ICRPM with two components: (i) \textit{a look-ahead mechanism} that conditions risk prediction on recent optimization context; and (ii) \textit{temporal-weighted trajectory augmentation} with exponential decay weighting to improve efficiency under tight sampling budgets.

\begin{figure}[h]
    \centering
    \includegraphics[width=\textwidth]{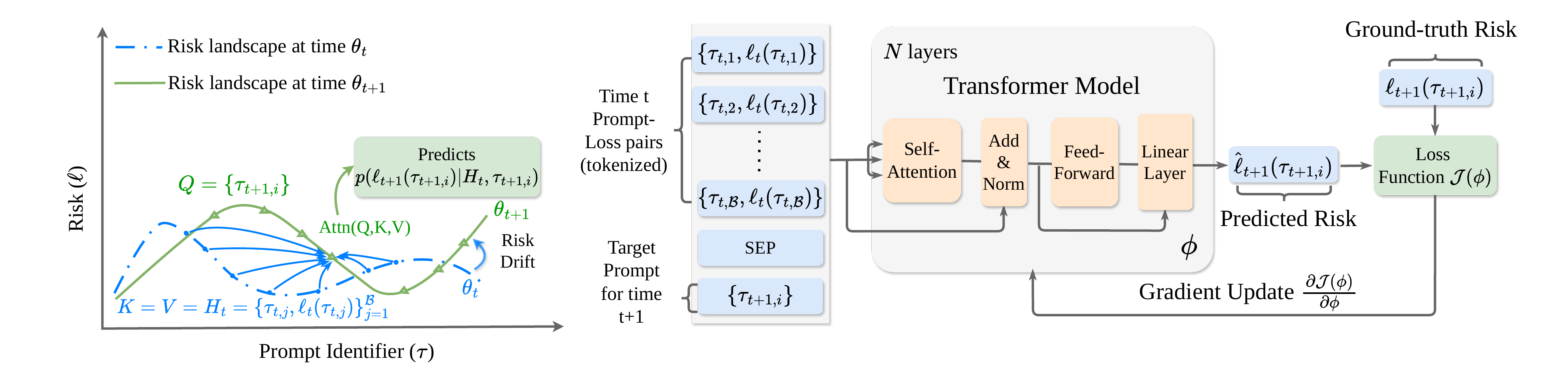}
    \caption{\textbf{Left}: The risk landscape evolves from $\boldsymbol{\theta}_t$ to $\boldsymbol{\theta}_{t+1}$, with historical prompt-loss pairs $H_t$ modeling temporal risk dynamics and predicting future risk using an attention mechanism.
    \textbf{Right}: The Transformer-based ICRPM tokenizes prompt-loss sequences with a target prompt at time $t+1$, and a regression head estimates the future risk $\hat{\ell}_{t+1}$ by minimizing the discrepancy with ground-truth $\ell_{t+1}$ through gradient optimization.}
    \label{fig:ICRPM_pipeline}
\end{figure}

\paragraph{In-Context Look-Ahead Risk Estimation.}
To overcome the single-step temporal misalignment in interpolation-based evaluation, ICRPM adopts an extrapolative formulation
$p_{\phi}(\ell_{t+1}(\boldsymbol{\tau})\mid \boldsymbol{\tau},H_t)$,
where the latest batch $H_t$ serves as the inference-time optimization context.
By directly targeting the future policy state $\boldsymbol{\theta}_{t+1}$, ICRPM predicts next-step task risk rather than reconstructing the lagged risk under $\boldsymbol{\theta}_t$, thereby alleviating the $\mathcal O(\eta)$ temporal bias identified in Theorem~\ref{thm:vanilla_bound}.

\paragraph{Exponentially Decayed Sliding Window Augmentation.}
The sliding window is a data-reuse mechanism for training the look-ahead predictor
under limited task evaluations. ICRPM maintains the $W$ most recent history
batches and trains on consecutive context--target pairs
$\left\{\left\{(\boldsymbol{\tau}_{t-k+1,i},
\ell_{t-k+1}(\boldsymbol{\tau}_{t-k+1,i}),H_{t-k})\right\}_{i=1}^{B}\right\}_{k=1}^{W}$,
weighted by $\gamma^k$. This reuse mitigates the sparse-history issue in
Theorem~\ref{thm:vanilla_bound}: recent batches provide more training pairs,
while exponential decay gives less weight to outdated histories.

\paragraph{Optimization Objective.}
Combining the look-ahead formulation and historical augmentation, ICRPM minimizes a temporal-weighted negative log-likelihood over in-context prediction targets: 
\begin{equation}
\min_{\boldsymbol{\phi}}\mathcal{J}(\boldsymbol{\phi})=-\sum_{k=1}^{W}\gamma^k \left(\frac{1}{B}\sum_{i=1}^{B}\ln p_{\boldsymbol{\phi}}\left(\ell_{t-k+1}(\bm{\tau}_{t-k+1,i}) \vert \boldsymbol{\tau}_{t-k+1,i}, H_{t-k} \right) \right)
\label{eq:icrpm_obj}
\end{equation}
where $\boldsymbol{\phi}$ denotes model parameters, $W$ is the look-back window and $\gamma\in (0,1]$ controls temporal decay. 
The objective in Eq.~(\ref{eq:icrpm_obj}) trains ICRPM on already observed one-step transitions:
at round $t$, it uses consecutive history pairs $H_{s-1}\!\to H_s$ with
$s\le t$ from the buffer. The unobserved batch $H_{t+1}$ is not used when
selecting $\mathcal T_{t+1}^B$.
Only after the selected batch is evaluated, the resulting $H_{t+1}$ is appended to the buffer and used as a target in later RPM
updates. 
At the inference stage, i.e., task evaluation under the coming $\bm{\theta}_{t+1}$, ICRPM takes in the latest batch $H_t$ as the context set $\mathcal{C}_t$ and the candidate identifier $\bm{\tau}\in{\mathcal{T}}_{t+1}^{\hat B}$ to predict risk $\hat{\ell}_{t+1}(\bm{\tau})$.

\begin{algorithm}[H]
    \footnotesize
    \caption{ICRPM Training \& Prompt and Robust Predictive Sampling (PPS and RPS)}
    \label{alg:icrpm}
    \SetInd{0.2em}{0.4em}

    \KwIn{
    Task/Prompt prior $p(\bm{\tau})$; 
    history buffer $\mathcal{S}=\{H_i\}$; 
    $\bm{\phi}$;
    $B, \hat{B}$; 
    $W$; 
    $\gamma$;
    $\alpha$;
    $\texttt{mode}\in\{\texttt{PPS},\texttt{RPS}\}$; 
    target $\rho^*$.
    }
    \KwOut{Selected batch $\mathcal{T}_{t+1}^{B}$}

    \hrule\vspace{0.5em}

    \noindent
    \begin{minipage}[t]{0.47\textwidth}
        \tcp{Online ICRPM Optimization}
        \For{each new batch $H_t$}{
            \tcp{FIFO}
            $\mathcal{S} \leftarrow \mathcal{S} \cup \{H_t\}$\;
            \lIf{$|\mathcal{S}| > W+1$}{Drop oldest batch in $\mathcal{S}$}

            $\mathcal{J}(\bm{\phi}) \leftarrow 0$\;

            \For{$k = 1$ \KwTo $\min(W,|\mathcal{S}|-1)$}{
                \tcp{Context vs. look-ahead}
                $H_{ctx}, H_{tgt} = H_{t-k}, H_{t-k+1}$\;
                $\mathcal{L}_{k} \leftarrow
                -\frac{1}{B}
                \sum_{j}
                \ln p_{\bm{\phi}}
                (
                y_{tgt,j}
                \mid
                \bm{\tau}_{tgt,j}, H_{ctx}
                )$\;
                $\mathcal{J}(\bm{\phi}) \mathrel{+}= \gamma^{k}\mathcal{L}_{k}$\;
            }
            \tcp{Update $\bm{\phi}$ by minimizing $\mathcal{J}(\bm{\phi})$}
             $\bm{\phi}$ = $\bm{\phi}$ - $\alpha\nabla_{\bm{\phi}}\mathcal{J}(\bm{\phi})$\;
        }
    \end{minipage}%
    \hfill
    \vrule %
    \hfill
    \begin{minipage}[t]{0.47\textwidth}
        \tcp{Predictive Task/Prompt Selection}
        Sample candidate pool: \\
        $\mathcal{T}_{t+1}^{\hat{B}} = \{\hat{\bm{\tau}}_{t+1,i}\}_{i=1}^{\hat{B}} \sim p(\bm{\tau})$\;

        \For{$i = 1$ \KwTo $\hat{B}$}{
            \tcp{One-step look-ahead}
            $\hat{y}_{i}^{\,t+1} = E_{p_{\bm{\phi}} ( y \mid \hat{\bm{\tau}}_{t+1, i}, H_t )}[y]$\;
        }

        \eIf{$\texttt{mode}=\texttt{PPS}$}{
            $c_{t+1}^{*}\leftarrow \rho^*$ \tcp*{$\hat{y}$ is success rate}
        }{
            $c_{t+1}^{*}\leftarrow \max_{\bm{\tau}} \hat{y}_{\bm{\tau}}^{\,t+1}$ \tcp*{$\hat{y}$ is risk}
        }

        $\mathcal{T}_{t+1}^{B}
        =
        \operatorname*{arg\,top-}B_{\bm{\tau} \in \mathcal{T}_{t+1}^{\hat{B}}}
        \left(
        -\big\|
        \hat{y}_{\bm{\tau}}^{\,t+1}
        -
        c_{t+1}^{*}
        \big\|_2^2
        \right)$\;

        \Return{$\mathcal{T}_{t+1}^{B}$}
    \end{minipage}
    \vspace{0.5em}
\end{algorithm}

\paragraph{First-order interpretation.}
The look-ahead objective in Eq.~(\ref{eq:icrpm_obj}) aligns RPM supervision with the policy state used for task
selection. The following proposition gives a lightweight formalization: if recent optimization
history enables the in-context predictor to capture the local risk drift, then the next-step prediction
error is controlled by a context approximation term and a second-order smoothness residual.
\begin{proposition}[Look-ahead reduces temporal mismatch]
\label{prop:lookahead-mismatch}
Let $\bm{\Delta}_t=\bm{\theta}_{t+1}-\bm{\theta}_t$ with
$\|\bm{\Delta}_t\|\leq \eta G$, and assume that $\ell(\bm{\theta},\bm{\tau})$ is
$\beta$-smooth in $\bm{\theta}$. For analysis, define the first-order look-ahead risk as
$m_t(\bm{\tau})=\ell_t(\bm{\tau})+
\nabla_{\bm{\theta}}\ell(\bm{\theta}_t,\bm{\tau})^\top\bm{\Delta}_t$.
If the ICRPM predictor satisfies
$\mathbb{E}_{\bm{\tau}\sim p}
|\hat{\ell}^{\mathrm{IC}}_{t+1}(\bm{\tau})-m_t(\bm{\tau})|
\leq\epsilon_{\mathrm{IC}}$, then
\begin{equation}
\mathbb{E}_{\bm{\tau}\sim p}
\left[
|\hat{\ell}^{\mathrm{IC}}_{t+1}(\bm{\tau})-\ell_{t+1}(\bm{\tau})|
\right]
\leq
\epsilon_{\mathrm{IC}}
+
\frac{\beta}{2}\eta^2G^2 .
\end{equation}
\end{proposition}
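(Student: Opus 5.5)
The plan is a triangle inequality around the first-order look-ahead risk $m_t(\bm{\tau})$, followed by a second-order Taylor (descent-lemma) bound on the remainder. For each fixed task $\bm{\tau}$ we write
\begin{equation*}
|\hat{\ell}^{\mathrm{IC}}_{t+1}(\bm{\tau})-\ell_{t+1}(\bm{\tau})|
\le
|\hat{\ell}^{\mathrm{IC}}_{t+1}(\bm{\tau})-m_t(\bm{\tau})|
+
|m_t(\bm{\tau})-\ell_{t+1}(\bm{\tau})| .
\end{equation*}
We then take expectations over $\bm{\tau}\sim p$ and use linearity. The first term is at most $\epsilon_{\mathrm{IC}}$ by hypothesis, so the whole argument reduces to bounding the second term uniformly in $\bm{\tau}$.

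For the second term, recall that $\ell_{t+1}(\bm{\tau})=\ell(\bm{\theta}_t+\bm{\Delta}_t,\bm{\tau})$ and that $m_t(\bm{\tau})$ is exactly the first-order Taylor expansion of $\ell(\cdot,\bm{\tau})$ at $\bm{\theta}_t$ in direction $\bm{\Delta}_t$. We take $\beta$-smoothness to mean that $\nabla_{\bm{\theta}}\ell(\cdot,\bm{\tau})$ is $\beta$-Lipschitz. The standard integral form of the remainder gives
\begin{equation*}
\ell_{t+1}(\bm{\tau})-m_t(\bm{\tau})
=\int_0^1\bigl(\nabla_{\bm{\theta}}\ell(\bm{\theta}_t+s\bm{\Delta}_t,\bm{\tau})-\nabla_{\bm{\theta}}\ell(\bm{\theta}_t,\bm{\tau})\bigr)^\top\bm{\Delta}_t\,ds .
\end{equation*}
Applying Cauchy--Schwarz and the Lipschitz gradient, its absolute value is at most $\int_0^1\beta s\|\bm{\Delta}_t\|^2ds=\frac{\beta}{2}\|\bm{\Delta}_t\|^2$. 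Substituting $\|\bm{\Delta}_t\|\le\eta G$ yields $\frac{\beta}{2}\eta^2G^2$, uniformly in $\bm{\tau}$. Integrating this constant bound against $p$ and adding $\epsilon_{\mathrm{IC}}$ gives the claim.

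The argument is essentially routine. The only point needing care is that the descent lemma must be used in its two-sided form, with the absolute value $|\ell(\bm{\theta}+\bm{\Delta})-\ell(\bm{\theta})-\nabla\ell^\top\bm{\Delta}|\le\frac{\beta}{2}\|\bm{\Delta}\|^2$. This two-sided version holds under Lipschitz gradients without any convexity assumption, which is why we derive it from the integral remainder rather than quoting the one-sided inequality. A related measurability subtlety is that $\bm{\Delta}_t$ may depend on the sampled batch. This is harmless because the bound is pointwise in $\bm{\Delta}_t$ and uses only the almost-sure norm bound $\|\bm{\Delta}_t\|\le\eta G$; that bound follows, as in Theorem~\ref{thm:vanilla_bound}, from the averaged-gradient update with $\|\nabla_{\bm{\theta}}\ell_t(\bm{\tau}_i)\|\le G$, and the proposition in any case assumes it directly.

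Compared with Theorem~\ref{thm:vanilla_bound}, this argument replaces the first-order mismatch $L\eta G$ with the second-order residual $\frac{\beta}{2}\eta^2G^2$, at the cost of requiring the in-context predictor to track the local drift $m_t$.
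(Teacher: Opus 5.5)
Your proposal is correct and takes essentially the same route as the paper: a pointwise triangle inequality through $m_t(\bm{\tau})$, the two-sided first-order remainder bound $\frac{\beta}{2}\|\bm{\Delta}_t\|^2\le\frac{\beta}{2}\eta^2G^2$ from $\beta$-smoothness, and then an expectation over $\bm{\tau}\sim p$. The only difference is that you derive the two-sided remainder explicitly from the integral form, whereas the paper simply quotes it as the standard bound.
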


This result provides a first-order explanation for the advantage of look-ahead risk prediction.
A lagged RPM estimates $\ell_t$ for task selection under $\bm{\theta}_{t+1}$ and therefore retains
the $O(\eta)$ temporal mismatch identified in Theorem~\ref{thm:vanilla_bound}. In contrast, ICRPM directly targets
the next-step risk: when the context predictor captures the local risk drift, the leading lagging bias
is replaced by the in-context approximation error $\epsilon_{\mathrm{IC}}$ and a second-order residual.

\subsection{Implementation and Neural Architecture}
Algorithm~\ref{alg:icrpm} summarizes the implementation; the selected batch is passed to downstream optimizers such as robust Meta-RL, DR, or LLM RLVR.
ICRPM performs the extrapolative formulation $p(\ell \vert \tau, H_{t}; \boldsymbol{\theta}_{t+1})$. 
At each iteration, the context set $\mathcal{C}_{t-1} = \{(\boldsymbol{\tau}_i, \ell_{t-1}(\boldsymbol{\tau}_i))\}_{i=1}^{B}$ consists of task-risk pairs evaluated under $\boldsymbol{\theta}_{t-1}$, while each training target is a pair $(\boldsymbol{\tau}, \ell_{t}(\boldsymbol{\tau}))$ under the last policy state $\boldsymbol{\theta}_{t}$. 
This yields $B$ supervised training episodes per iteration, one for each $j \in [B]$, without any additional environment interaction.
\paragraph{Risk Sequence Tokenization and Cross-Attention.}
ICRPM has a simple design. As illustrated in Fig.~\ref{fig:ICRPM_pipeline}, we tokenize the task-risk pairs and adopt the Transformer as the default neural architecture to capture the correlation between the target and past risk sequences.
This way encodes the context into a permutation-invariant representation and produces $p(\ell \vert \tau, H_{t}; \boldsymbol{\theta}_{t+1})$ via cross-attention.

\section{Results and Analysis}
\label{sec:results}

This section empirically examines ICRPM's influence within MPTS. Motivated by the failure modes diagnosed in Section~\ref{sec:theory_vanilla}, the experiments are designed to answer two Research Questions (RQs):
\begin{enumerate}
    \item Do extrapolative RPMs enhance task evaluation reliability over iteration?
    \item How does task evaluation reliability influence performance in downstream applications?
\end{enumerate}
We track Spearman Rank Correlation (SRC) between predicted task difficulty and ground-truth task risk as a diagnostic measure of rank reliability, and report CVaR$_\alpha$~\citep{rockafellar2000optimization} for robust optimization scenarios and test average performance for RLVR. 
Full experimental details are provided in Appendix~\ref{expdetails}.

\paragraph{Benchmarks and Baselines.}
We evaluate ICRPM on few-shot sinusoid regression, Meta-RL, and DR, and LLM post-training via RLVR. 
(i) For regression and continuous-control, we retain the standard MPTS protocol~\citep{wang2025modelpredictivetasksampling}, e.g., the same task distribution, batch size, candidate pool, selection budget, and downstream optimizer; and baselines include ERM~\citep{vapnik1998statistical}, GDRM~\citep{sagawa2019distributionally}, DRM~\citep{wang2024simple,greenberg2024train}, OHTM~\citep{kumar2023effect}, MPTS~\citep{wang2025modelpredictivetasksampling}.
(ii) For LLM RL post-training, we adopt the curriculum-based setup of~\citep{gao2026prompt}, and baselines are uniform sampling and SoTA predictive sampling methods such as PCL~\citep{gao2026prompt}, GRESO~\citep{zheng2025act}, and MoPPS~\citep{qu2026can}. 
For both cases, ICRPM replaces only the interpolation-based RPM with a parameter-matched context-conditioned one-step extrapolative predictor.

\subsection{Robust Few-Shot Sinusoid Regression}
\label{sec:sinusoid}

We first validate ICRPM on the classic few-shot sinusoid regression benchmark~\citep{finn2017model}, which provides a controlled setting to isolate the effect of improved task risk estimation.
Tasks are parameterised by amplitude and phase, and MAML is used as the meta-learning backbone.

\begin{figure}[h]
    \centering
    \includegraphics[width=\linewidth]{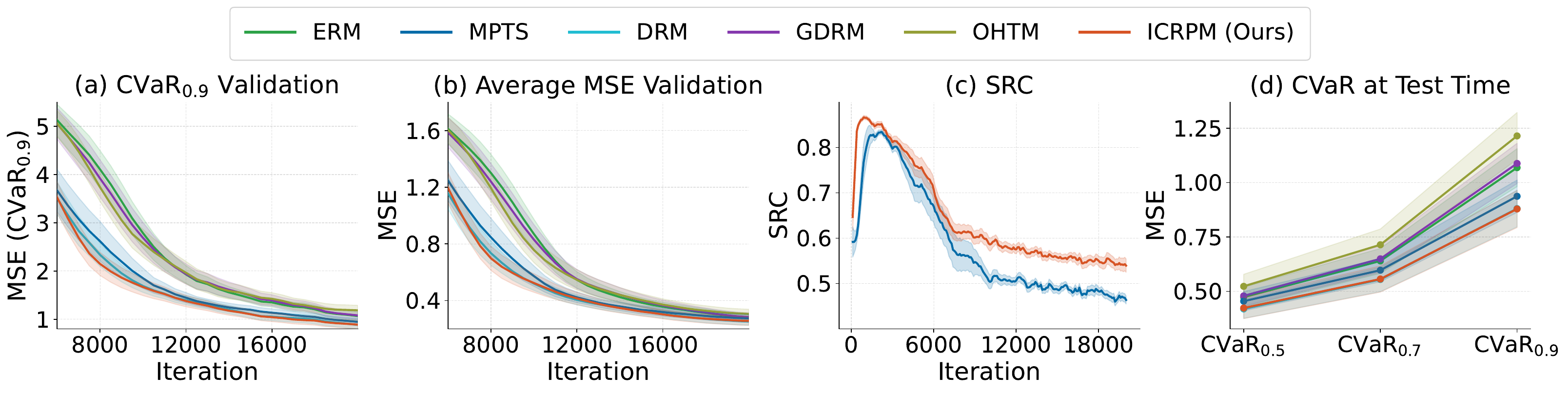}
    \caption{
        \textbf{Few-shot sinusoid regression results (10 Runs, Mean~$\pm$~Standard Error of the Mean (SEM)).}
        From left to right:
        (a)~validation CVaR$_{0.9}$ MSE curves during meta-training;
        (b)~validation average MSE curves during meta-training;
        (c)~SRC of predicted vs.\ ground-truth difficulty during training;
        (d)~meta-testing CVaR$_\alpha$ MSE at multiple robustness levels.
        ICRPM (ours) achieves consistently lower tail-risk error and maintains higher SRC throughout training.
    }
    \label{fig:sinusoid}
\end{figure}

\paragraph{Few-Shot Regression Analysis.}
In Fig.~\ref{fig:sinusoid}, ICRPM consistently outperforms all baselines. It achieves lower average MSE (Fig.~\ref{fig:sinusoid}b) and CVaR$_{0.9}$ MSE (Fig.~\ref{fig:sinusoid}a) with faster convergence, while maintaining higher SRC with ground-truth task risk throughout training (Fig.~\ref{fig:sinusoid}c). At meta-test time (Fig.~\ref{fig:sinusoid}d), ICRPM achieves the lowest CVaR$_\alpha$ MSE across all $\alpha\in\{0.5,0.7,0.9\}$, suggesting that more reliable task prioritisation is accompanied by stronger tail-task generalization. Vanilla MPTS underperforms in both convergence stability and worst-case performance.

\subsection{Robust Meta-RL and Domain Randomization}
\label{sec:metarl_dr}

We evaluate ICRPM on Meta-RL over three MuJoCo environments~\citep{finn2017model,todorov2012mujoco} and zero-shot DR on Ergo-Reacher and Lunar-Lander~\citep{mehta2020active}.
Both settings require generalisation across a distribution of MDPs parameterised by continuous task identifiers, where robustness to tail or out-of-distribution tasks is critical.

\begin{figure*}[t]
    \centering
    \includegraphics[width=\textwidth]{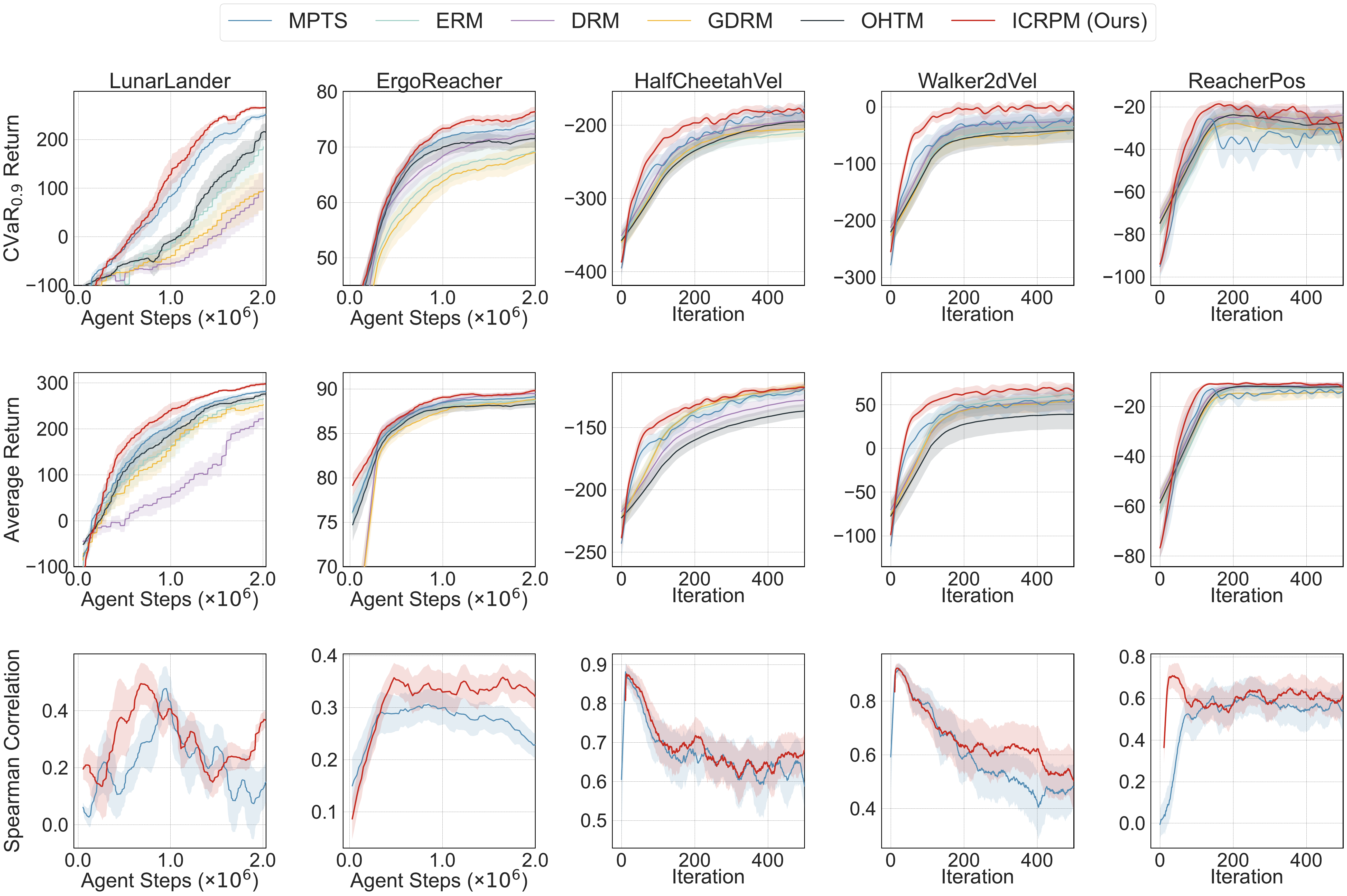}
    \caption{
        \textbf{DR (columns 1--2) and Meta-RL (columns 3--5) learning curves and correlation trends (10 Runs, Mean~$\pm$~SEM).}
        Rows from top to bottom: CVaR$_{0.9}$ returns, average returns, and SRC of predicted vs.\ ground-truth task risk.
        ICRPM consistently attains the highest tail-risk performance while maintaining competitive average returns, alongside superior rank correlation throughout training.
    }
    \label{fig:metarl_dr}
\end{figure*}

\begin{table*}[htbp]
\centering
\caption{\textbf{Evaluation return on five robotic control tasks. (10 Runs, Mean~$\pm$~SEM)} The best results are bold, with the runner-up underlined.}
\label{tab:rl_cvar_5env_results}
\vspace{2pt}
\renewcommand{\arraystretch}{1.3}
\setlength{\tabcolsep}{3.5pt}

\resizebox{\textwidth}{!}{%
\begin{tabular}{l|cc|cc|cc|cc|cc}
\toprule

\textbf{Method}
& \multicolumn{2}{c|}{\textbf{HalfCheetahVel}}
& \multicolumn{2}{c|}{\textbf{Walker2dVel}}
& \multicolumn{2}{c|}{\textbf{ReacherPos}}
& \multicolumn{2}{c|}{\textbf{LunarLander}}
& \multicolumn{2}{c}{\textbf{ErgoReacher}} \\

\cmidrule(lr){2-3}
\cmidrule(lr){4-5}
\cmidrule(lr){6-7}
\cmidrule(lr){8-9}
\cmidrule(lr){10-11}

& CVaR$_{0.5}$ & CVaR$_{0.9}$
& CVaR$_{0.5}$ & CVaR$_{0.9}$
& CVaR$_{0.5}$ & CVaR$_{0.9}$
& CVaR$_{0.5}$ & CVaR$_{0.9}$
& CVaR$_{0.5}$ & CVaR$_{0.9}$ \\
\midrule

ERM~\citep{vapnik1998statistical}
& -153.6{\scriptsize{$\pm$3.9}} & -209.9{\scriptsize{$\pm$7.6}}
& 23.5{\scriptsize{$\pm$15.6}} & -34.2{\scriptsize{$\pm$20.7}}
& -17.5{\scriptsize{$\pm$1.9}} & -31.1{\scriptsize{$\pm$5.4}}
& 243.0{\scriptsize{$\pm$14.2}} & 185.5{\scriptsize{$\pm$30.4}}
& 83.8{\scriptsize{$\pm$0.7}} & 70.1{\scriptsize{$\pm$1.4}} \\

DRM~\citep{wang2024simple,greenberg2024train}
& -151.1{\scriptsize{$\pm$6.2}} & -195.5{\scriptsize{$\pm$8.4}}
& \underline{26.1}{\scriptsize{$\pm$12.2}} & \underline{-26.0}{\scriptsize{$\pm$14.7}}
& \underline{-16.0}{\scriptsize{$\pm$2.2}} & \textbf{-26.5}{\scriptsize{$\pm$5.6}}
& 184.2{\scriptsize{$\pm$25.0}} & 96.8{\scriptsize{$\pm$36.6}}
& \underline{85.3}{\scriptsize{$\pm$0.4}} & 72.9{\scriptsize{$\pm$1.0}} \\

GDRM~\citep{sagawa2019distributionally}
& -149.8{\scriptsize{$\pm$4.3}} & -206.4{\scriptsize{$\pm$8.0}}
& 16.1{\scriptsize{$\pm$13.3}} & -42.8{\scriptsize{$\pm$16.6}}
& -19.6{\scriptsize{$\pm$4.3}} & -30.7{\scriptsize{$\pm$6.1}}
& 207.9{\scriptsize{$\pm$18.4}} & 93.7{\scriptsize{$\pm$38.0}}
& 84.4{\scriptsize{$\pm$0.7}} & 70.2{\scriptsize{$\pm$1.7}} \\

OHTM~\citep{kumar2023effect}
& -160.0{\scriptsize{$\pm$7.3}} & -194.8{\scriptsize{$\pm$10.3}}
& 7.6{\scriptsize{$\pm$21.5}} & -40.8{\scriptsize{$\pm$21.3}}
& -16.3{\scriptsize{$\pm$1.4}} & -28.9{\scriptsize{$\pm$5.6}}
& 258.7{\scriptsize{$\pm$6.4}} & 215.7{\scriptsize{$\pm$17.9}}
& 83.8{\scriptsize{$\pm$0.7}} & 72.6{\scriptsize{$\pm$2.0}} \\

MPTS~\citep{wang2025modelpredictivetasksampling}
& \underline{-143.6}{\scriptsize{$\pm$4.6}} & \underline{-184.0}{\scriptsize{$\pm$8.3}}
& 24.6{\scriptsize{$\pm$16.2}} & -27.9{\scriptsize{$\pm$19.1}}
& -20.1{\scriptsize{$\pm$3.4}} & -33.2{\scriptsize{$\pm$8.0}}
& \underline{268.5}{\scriptsize{$\pm$3.4}} & \underline{251.8}{\scriptsize{$\pm$4.1}}
& 84.7{\scriptsize{$\pm$0.5}} & \underline{75.7}{\scriptsize{$\pm$1.1}} \\

\midrule
\rowcolor{myhighlight}
\textbf{ICRPM (Ours)}
& \textbf{-140.5}{\scriptsize{$\pm$3.5}} & \textbf{-180.2}{\scriptsize{$\pm$6.4}}
& \textbf{43.2}{\scriptsize{$\pm$6.7}} & \textbf{-1.5}{\scriptsize{$\pm$7.2}}
& \textbf{-15.2}{\scriptsize{$\pm$0.9}} & \underline{-27.8}{\scriptsize{$\pm$3.8}}
& \textbf{285.3}{\scriptsize{$\pm$2.6}} & \textbf{265.8}{\scriptsize{$\pm$2.1}}
& \textbf{86.5}{\scriptsize{$\pm$0.4}} & \textbf{78.3}{\scriptsize{$\pm$0.7}} \\

\bottomrule
\end{tabular}%
}
\end{table*}

\paragraph{Meta-RL Analysis.}
Fig.~\ref{fig:metarl_dr} (columns 3--5) reports training dynamics on HalfCheetahVel, Walker2dVel, and ReacherPos. ICRPM achieves the highest CVaR$_{0.9}$ validation returns on all three benchmarks while preserving competitive average performance --- a balance DRM fails to maintain, notably degrading on Walker2dVel. The SRC curves (row~3) confirm that the Transformer-based RPM with temporal-weighted augmentation sustains markedly higher rank correlation than MPTS throughout training. At meta-test time (Table~\ref{tab:rl_cvar_5env_results}), ICRPM attains the best CVaR$_\alpha$ returns across nearly all environments and robustness levels, with especially large gains on Walker2dVel (CVaR$_{0.9}$: $-1.5$ vs.\ $-27.9$ for MPTS) and substantially reduced variance. 

\begin{wrapfigure}[10]{r}{0.38\textwidth}
    \centering
    \vspace{-12pt}
    \includegraphics[width=0.34\textwidth]{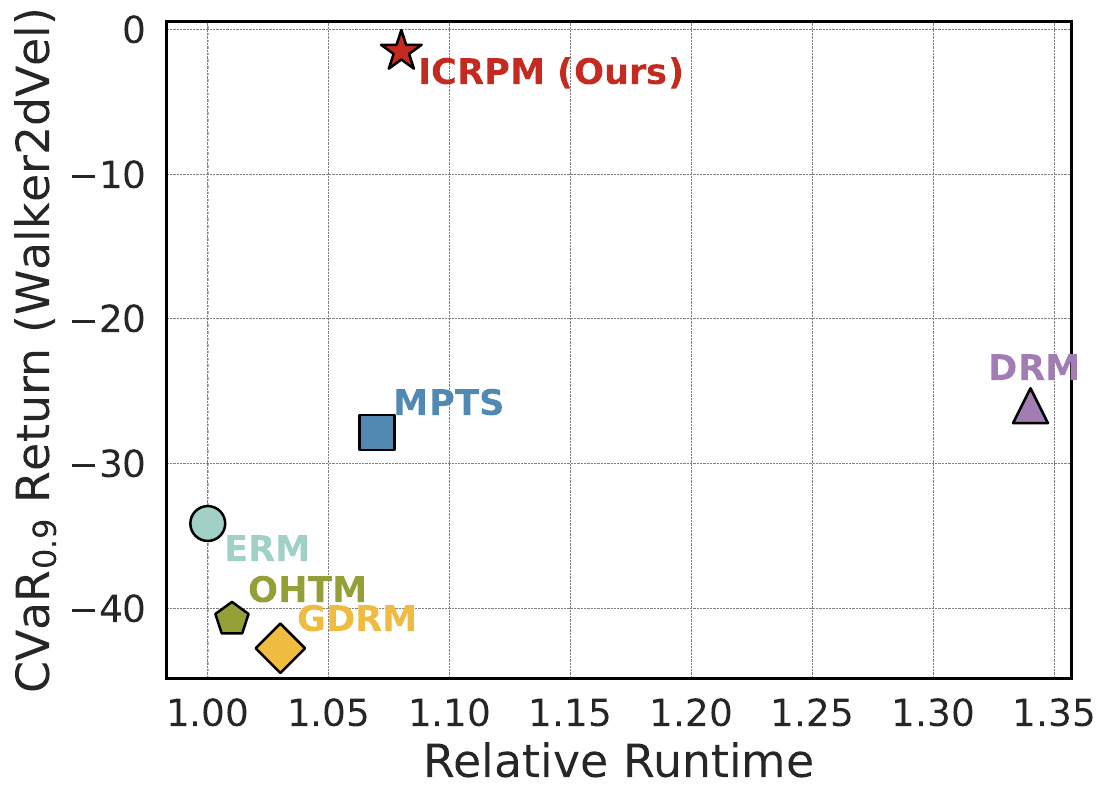}
    \caption{\textbf{CVaR$_{0.9}$ return vs.\ relative runtime on Walker2dVel.}}
    \label{fig:runtime}
    \vspace{-10pt}
\end{wrapfigure}

\paragraph{Domain Randomization Analysis.}
On Ergo-Reacher and Lunar-Lander (Fig.~\ref{fig:metarl_dr}, columns 1--2), ICRPM consistently outperforms all baselines in CVaR$_{0.9}$ returns while remaining competitive in average performance.
Notably, on Lunar-Lander ICRPM improves CVaR$_{0.9}$ at convergence by over 40\% relative to ERM, effectively handling the long-tail of adversarial engine-strength configurations.
The SRC advantage of ICRPM over MPTS persists in both DR environments, demonstrating that mitigating temporal lagging bias extends beyond episodic few-shot learning to the zero-shot continuous-control regime.

\paragraph{Computational Overhead.}
Fig.~\ref{fig:runtime} plots CVaR$_{0.9}$ return against relative runtime on Walker2dVel (normalised to ERM).
ICRPM incurs only 8\% overhead (1.08$\times$), comparable to MPTS (1.07$\times$) and far below DRM (1.34$\times$), whose per-iteration importance-weight computation dominates cost.
GDRM and OHTM add minimal time (1.03$\times$ and 1.01$\times$) but lag considerably in tail-risk performance.
The scatter clearly shows that ICRPM occupies the ideal upper-left region with the highest CVaR$_{0.9}$ return at near-baseline computational cost, confirming that its performance gains come at negligible additional runtime budget.

\subsection{Sample-Efficient RLVR on Countdown}
\label{sec:rlvr}

To evaluate the effectiveness of ICRPM in accelerating LLM post-training, we conduct experiments on the Countdown symbolic reasoning task.
Countdown requires composing given numbers with arithmetic operations to reach a target value, offering verifiable binary rewards that enable straightforward RLVR training.

\paragraph{Experimental Setup.}
We use \textbf{Qwen3-4B-Base}~\citep{yangqwen3} as the policy backbone.
Following the RLVR paradigm, we employ Group Relative Policy Optimization (GRPO)~\citep{shao2024deepseekmath} to optimize the model with verifiable rewards that consider both mathematical correctness and format adherence.
Training is performed on a 20k-instance subset of Countdown-34 (CD34); evaluation is conducted on both CD34 and the harder Countdown-4 (CD4) variant.

\begin{figure}[htbp]
    \centering
    \begin{minipage}{0.62\textwidth}
        \centering
        \includegraphics[width=\linewidth]{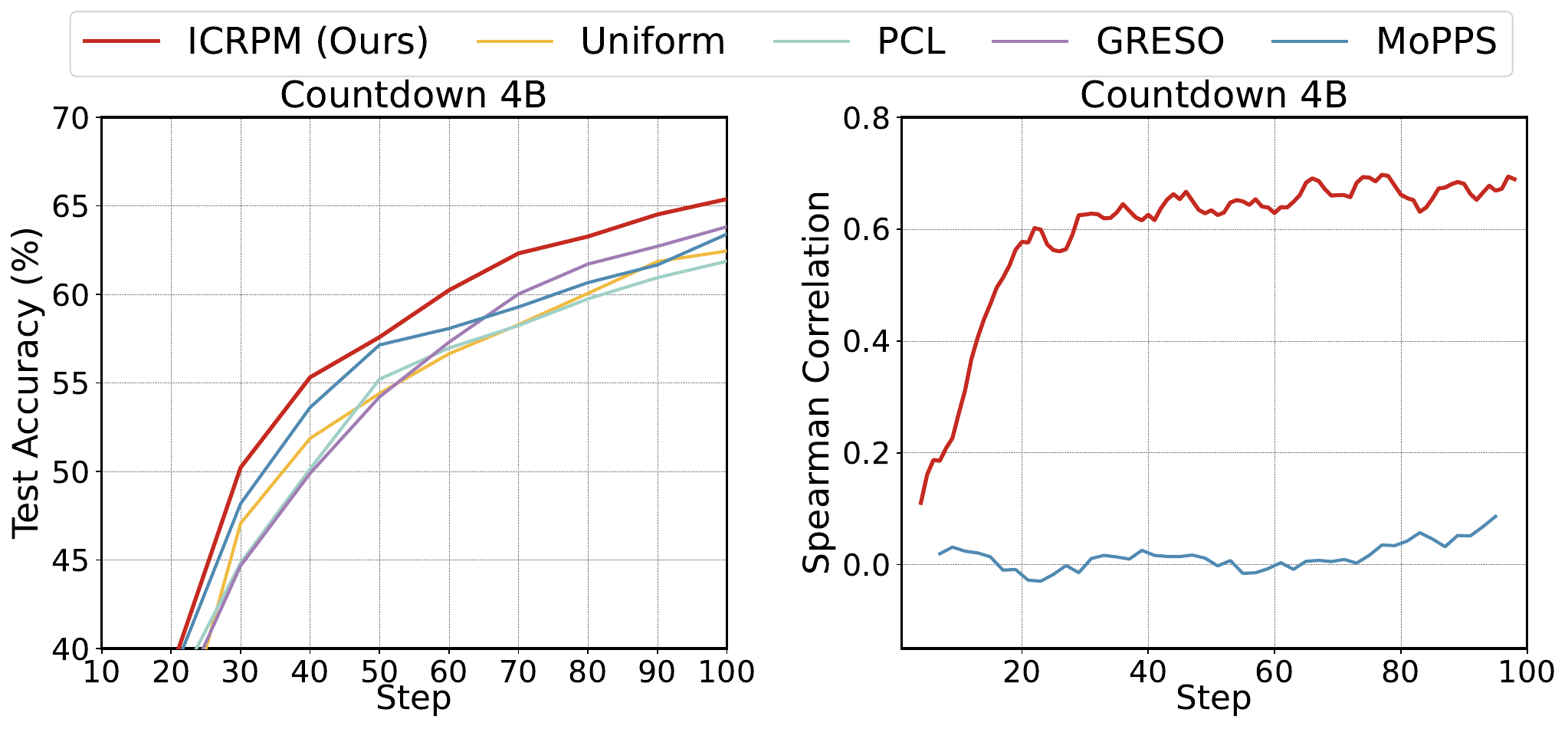}
        \caption{
            \textbf{Test accuracy (left) and SRC (right) on Countdown 4B.}
            ICRPM demonstrates significantly faster convergence and higher correlation with ground-truth difficulty throughout RL post-training.
        }
        \label{fig:cd4_curve}
    \end{minipage}
    \hfill
    \begin{minipage}{0.35\textwidth}
    \raggedright %
    \captionof{table}{
        \textbf{Evaluation results on Countdown.}
        `+' denotes finetuning with the corresponding method.
        \textbf{Bold}: best; \underline{underline}: runner-up.
    }
    \label{tab:cdeval}
    \resizebox{\linewidth}{!}{%
        \begin{tabular}{lccc} %
            \toprule
            \textbf{Method} & \textbf{CD4} & \textbf{CD34} & \textbf{Avg.} \\
            \midrule
            Qwen3-4B~\citep{yangqwen3}        & 1.3  & 3.5  & 2.4  \\
            +Uniform        & 51.1 & 73.8 & 62.5 \\
            +PCL~\citep{gao2026prompt}           & 51.0 & 72.8 & 61.9 \\
            +GRESO~\citep{zheng2025act}          & \underline{53.8} & 73.8 & \underline{63.8} \\
            +MoPPS~\citep{qu2026can}          & 52.9 & \underline{73.9} & 63.4 \\
            \midrule
            \rowcolor{myhighlight}
            +\textbf{ICRPM} & \textbf{55.7} & \textbf{75.1} & \textbf{65.4} \\
            \bottomrule
        \end{tabular}
    }
\end{minipage}
\end{figure}

\paragraph{Curriculum RLVR Analysis.}
As summarised in Table~\ref{tab:cdeval} and Fig.~\ref{fig:cd4_curve}, ICRPM consistently outperforms all baselines. On CD4, ICRPM achieves \textbf{55.7\%} accuracy, surpassing GRESO by 1.9 points and MoPPS by 2.8 points, with similar gains on CD34. 
The training curve (Fig.~\ref{fig:cd4_curve}, left) shows faster convergence and superior sample efficiency, while the SRC curve (right) indicates more reliable online prompt-difficulty rankings as the policy evolves. This addresses the sparse-history and temporal-lag issues discussed in Section~\ref{sec:theory_vanilla}: unlike prompt-specific models such as MoPPS, ICRPM shares information across prompts and better tracks non-stationary difficulty during RL post-training.

\subsection{Ablation Studies}
\label{sec:ablation}

\begin{figure}[h]
    \centering

    \begin{minipage}[t]{0.48\linewidth}
        \centering
        \includegraphics[width=\linewidth]{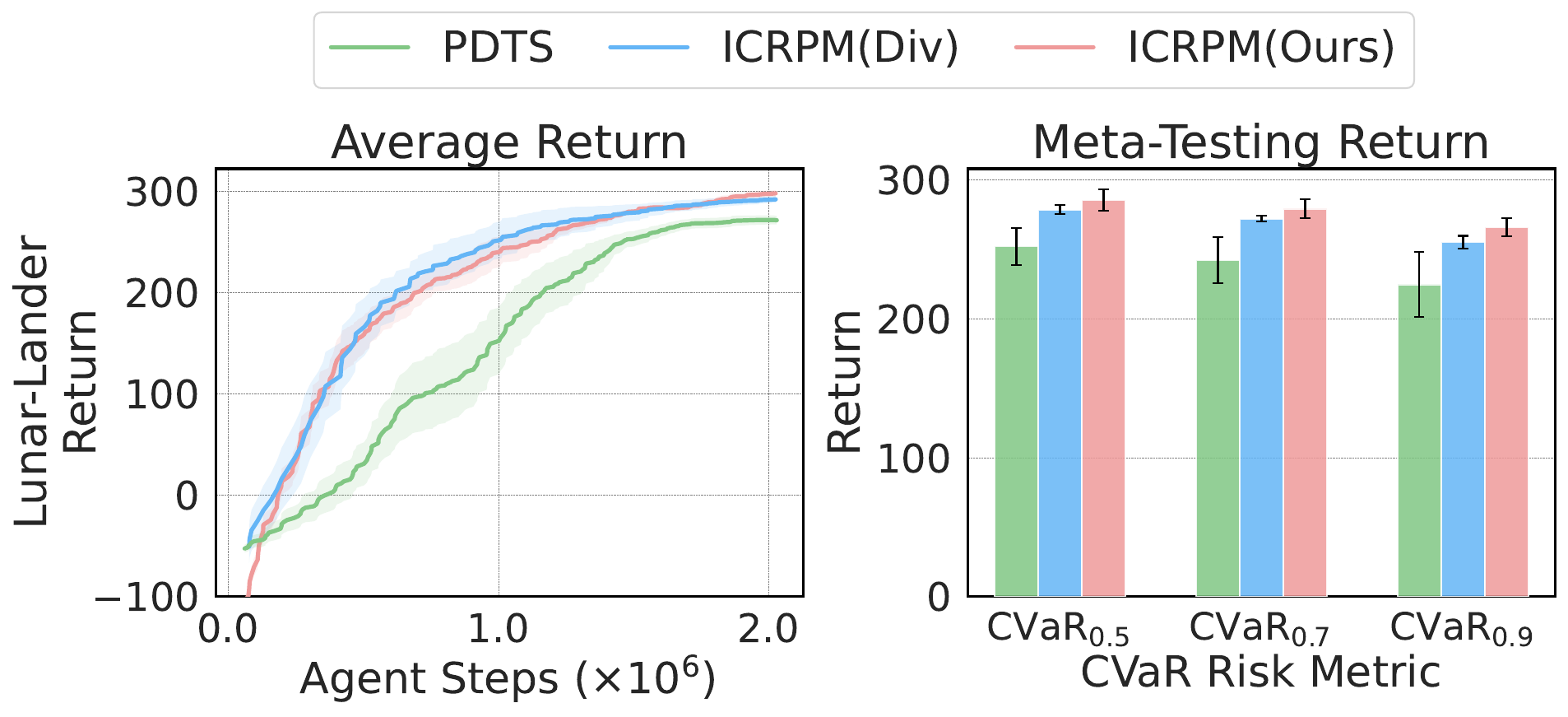}
    \end{minipage}
    \hfill
    \begin{minipage}[t]{0.48\linewidth}
        \centering
        \includegraphics[width=\linewidth]{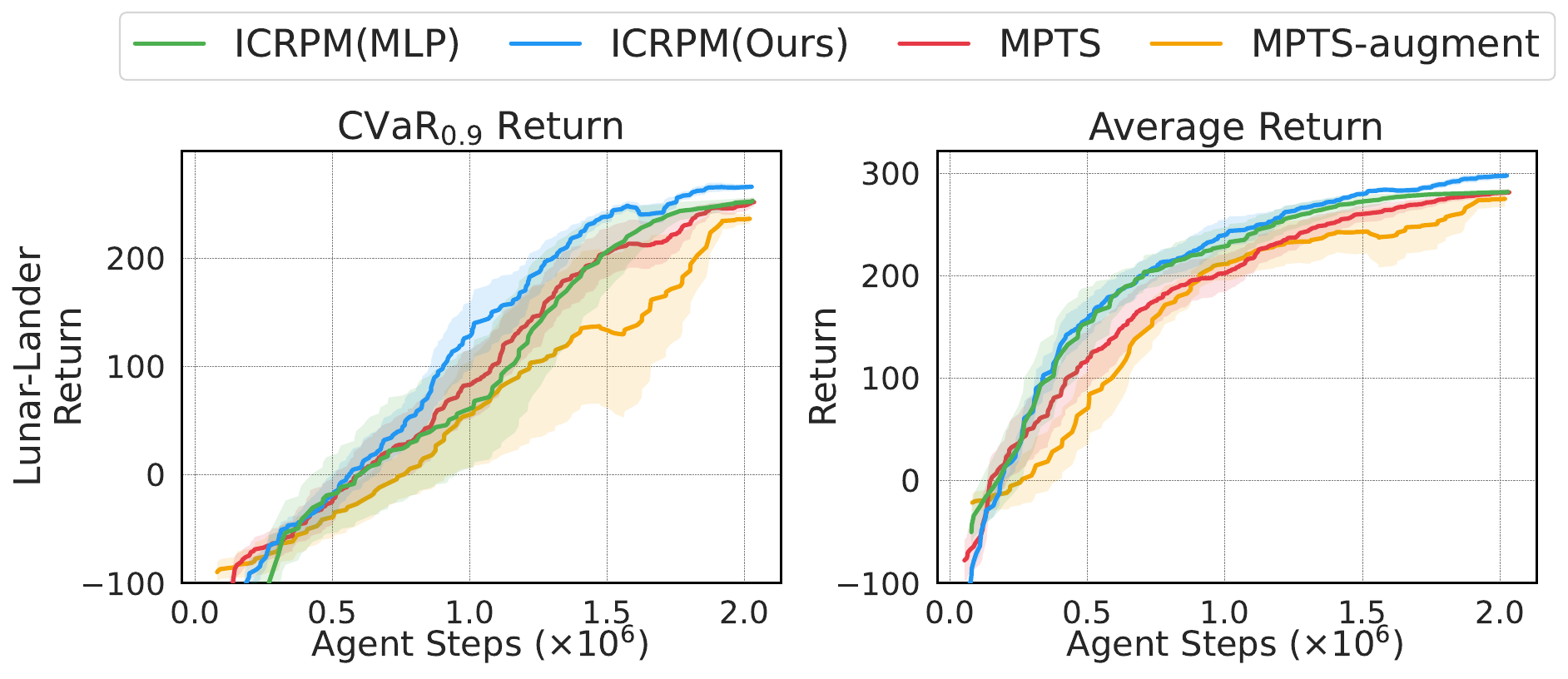}
    \end{minipage}

    \caption{
        \textbf{Ablation studies on Lunar-Lander (10 Runs).}
        \emph{Left}: Impact of acquisition principles.
        Two ICRPM variants (worst-case and diversity-aware) versus PDTS~\citep{qu2025fast}.
        \emph{Right}: Architecture ablation comparing ICRPM with MLP or Transformer backbone, vanilla MPTS, and MPTS with history augmentation.
    }
    \label{fig:ablation}
\end{figure}

We ablate the subset acquisition principles, RPM backbones, window size $W$, and decay factor $\gamma$ on Lunar-Lander. 
Fig.~\ref{fig:ablation} shows that ICRPM improves over PDTS under both worst-case and diversity-aware acquisition. 
For architecture and history augmentation, ICRPM with a parameter-matched MLP still outperforms vanilla MPTS, while MPTS augmented with the same three-step window and $\gamma=0.9$ remains below ICRPM and ICRPM (MLP). 
This indicates that the gains mainly come from context-conditioned look-ahead risk prediction, rather than a specific acquisition rule, Transformer backbone, or history-reuse heuristic.
Together, these ablations match the decomposition in Theorem~\ref{thm:vanilla_bound}: look-ahead prediction addresses temporal lag, while windowed reuse mitigates sparse history. 
Fig.~\ref{fig:wag} shows that $W$ has a stronger effect than $\gamma$: we choose $W=3$ because it gives the best CVaR$_{0.9}$ performance in the ablation, and use $\gamma=0.9$ across all benchmarks as a robust default that mildly discounts older histories.

\begin{figure}[h]
    \centering

    \begin{minipage}[t]{0.48\linewidth}
        \centering
        \includegraphics[width=\linewidth]{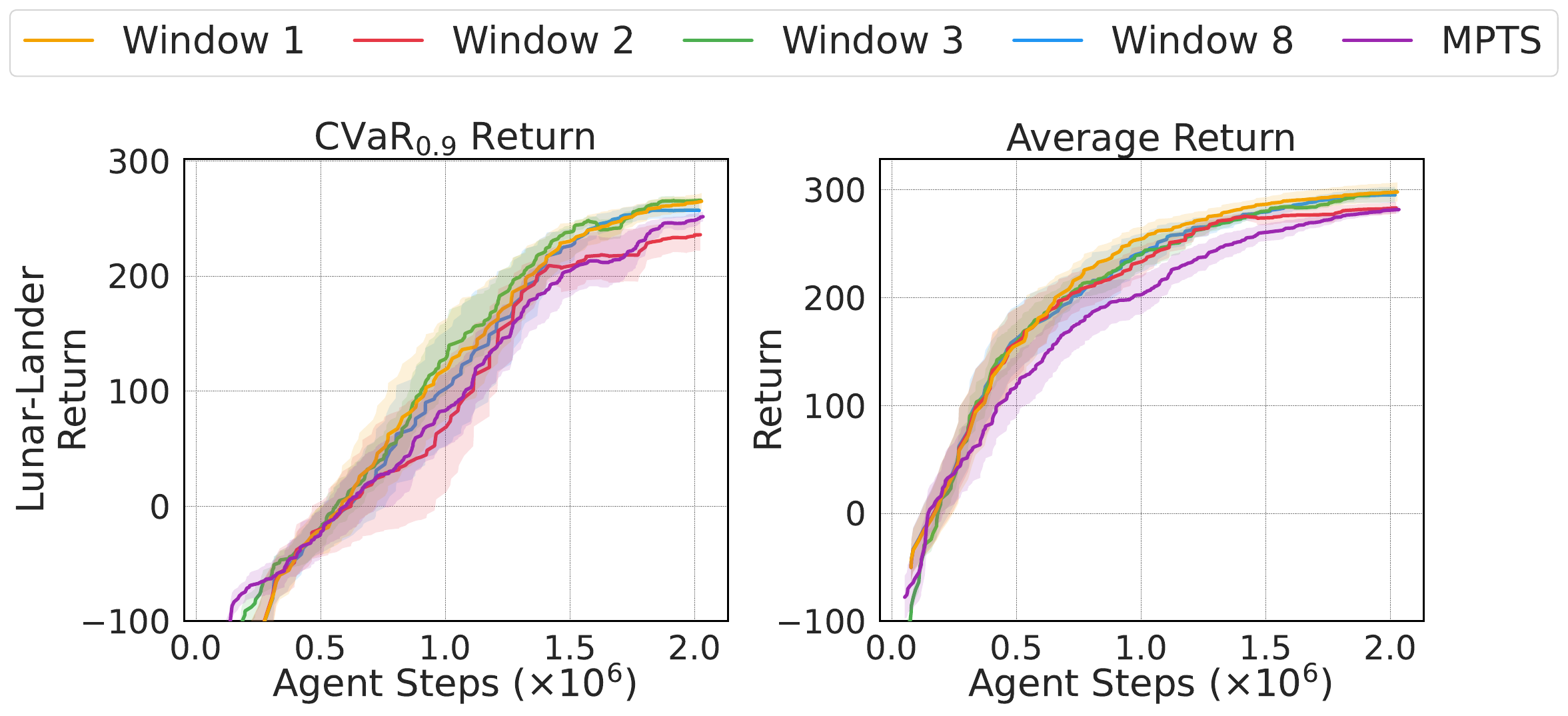}
    \end{minipage}
    \hfill
    \begin{minipage}[t]{0.48\linewidth}
        \centering
        \includegraphics[width=\linewidth]{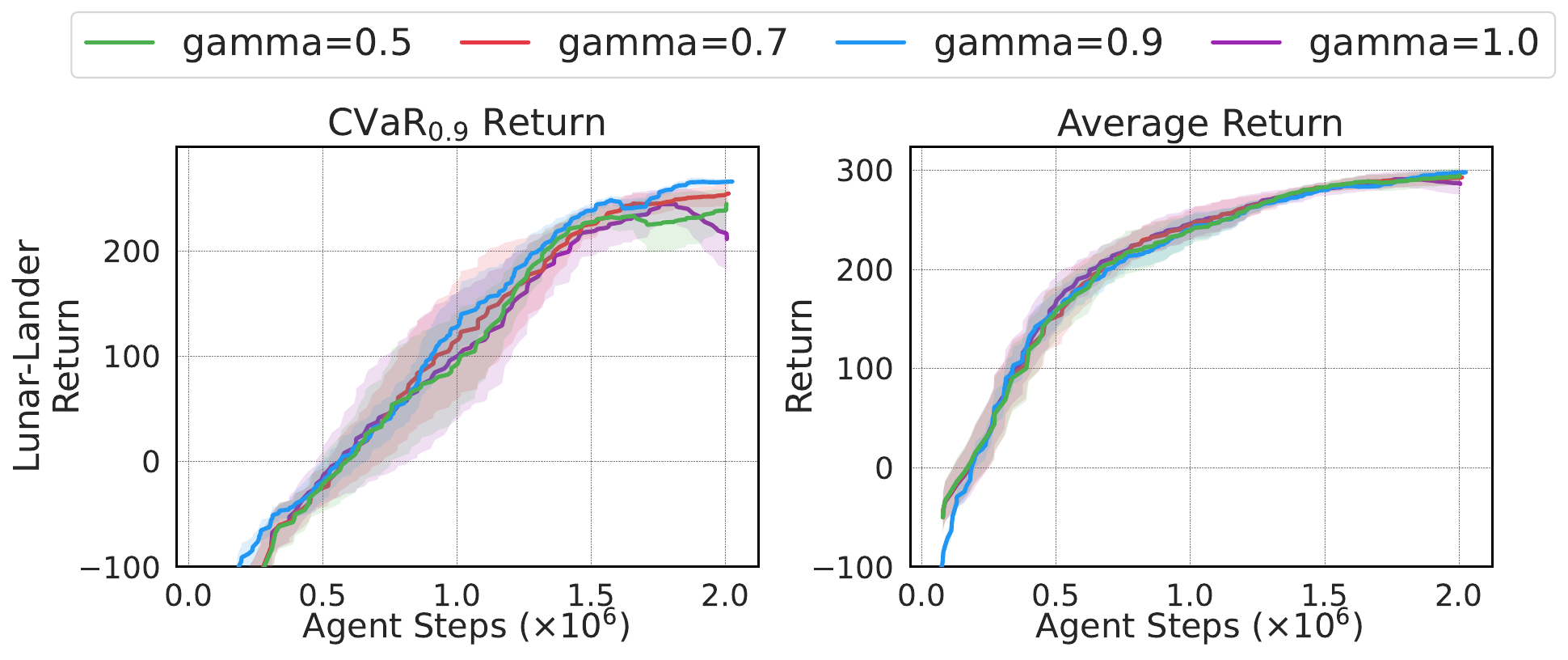}
    \end{minipage}

    \caption{
        \textbf{Sensitivity to $W$ and $\gamma$ on Lunar-Lander (10 Runs).}
        \emph{Left}: Sensitivity to look-back window $W$.
        CVaR$_{0.9}$ returns, average returns under $W \in \{1,2,3,8\}$.
        \emph{Right}: Sensitivity to decay factor $\gamma$.
        CVaR$_{0.9}$ returns, average returns under $\gamma \in \{0.5,0.7,0.9,1.0\}$. 
    }
    \label{fig:wag}
\end{figure}

\section{Conclusion}

This work identifies a structural limitation of MPTS: vanilla RPMs estimate risks under the current policy but use them for post-update task selection, inducing first-order lagging bias and unstable task rankings. We propose ICRPM, which reframes risk prediction as context-conditioned one-step extrapolation with temporal-weighted history augmentation. Empirically, ICRPM improves risk ranking, sample efficiency, and tail-risk performance across regression, robust Meta-RL, domain randomization, and RLVR-based LLM post-training, suggesting that task samplers should model policy-induced risk dynamics rather than rely on static or lagged difficulty estimates.

\bibliographystyle{unsrtnat}
\bibliography{refs}

@inproceedings{finn2017model,
  title={Model-agnostic meta-learning for fast adaptation of deep networks},
  author={Finn, Chelsea and Abbeel, Pieter and Levine, Sergey},
  booktitle={International conference on machine learning},
  pages={1126--1135},
  year={2017},
  organization={PMLR}
}

@inproceedings{taolatent2026,
  title={Latent Space Robust Optimization of Neural Processes with Aligned Stratified Order-Statistic Loss Reduction},
  author={Tao, Qi and Wen, Jiarong and Yang, Jing and Wu, Guanlin and Kaiyu, Zhang and Lv, Yiqin and Du, Wumei and Liang, Xingxing and Wang, Qi},
  year={2026},
  booktitle={Forty-third International Conference on Machine Learning}
}

@article{qu2026listwise,
  title={Listwise Policy Optimization: Group-based RLVR as Target-Projection on the LLM Response Simplex},
  author={Qu, Yun and Wang, Qi and Mao, Yixiu and Zou, Heming and Jiang, Yuhang and Li, Yingyue and Xu, Wutong and Cai, Lizhou and Liu, Weijie and Bai, Clive and others},
  journal={arXiv preprint arXiv:2605.06139},
  year={2026}
}

@article{mao2026rlvr,
  title={RLVR without Ineffective Samples: Group Prioritized Off-Policy Optimization for LLM Reasoning},
  author={Mao, Yixiu and Qu, Yun and Wang, Qi and Zou, Heming and Ji, Xiangyang},
  journal={arXiv preprint arXiv:2606.01281},
  year={2026}
}

@inproceedings{
gao2026prompt,
title={Prompt Curriculum Learning for Efficient {LLM} Post-Training},
author={Zhaolin Gao and Joongwon Kim and Wen Sun and Thorsten Joachims and Sid Wang and Richard Yuanzhe Pang and Liang Tan},
booktitle={The Fourteenth International Conference on Learning Representations},
year={2026},
url={https://openreview.net/forum?id=zqOCacBD3P}
}

@article{yu2025dapo,
  title={Dapo: An open-source llm reinforcement learning system at scale},
  author={Yu, Qiying and Zhang, Zheng and Zhu, Ruofei and Yuan, Yufeng and Zuo, Xiaochen and Yue, Yu and Dai, Weinan and Fan, Tiantian and Liu, Gaohong and Liu, Lingjun and others},
  journal={arXiv preprint arXiv:2503.14476},
  year={2025}
}

@inproceedings{haarnoja2018soft,
  title={Soft actor-critic: Off-policy maximum entropy deep reinforcement learning with a stochastic actor},
  author={Haarnoja, Tuomas and Zhou, Aurick and Abbeel, Pieter and Levine, Sergey},
  booktitle={International conference on machine learning},
  pages={1861--1870},
  year={2018},
  organization={Pmlr}
}

@inproceedings{rezende2014stochastic,
  title={Stochastic backpropagation and approximate inference in deep generative models},
  author={Rezende, Danilo Jimenez and Mohamed, Shakir and Wierstra, Daan},
  booktitle={International conference on machine learning},
  pages={1278--1286},
  year={2014},
  organization={PMLR}
}

@inproceedings{todorov2012mujoco,
  title={Mujoco: A physics engine for model-based control},
  author={Todorov, Emanuel and Erez, Tom and Tassa, Yuval},
  booktitle={2012 IEEE/RSJ international conference on intelligent robots and systems},
  pages={5026--5033},
  year={2012},
  organization={IEEE}
}

@inproceedings{mehta2020active,
  title={Active domain randomization},
  author={Mehta, Bhairav and Diaz, Manfred and Golemo, Florian and Pal, Christopher J and Paull, Liam},
  booktitle={Conference on Robot Learning},
  pages={1162--1176},
  year={2020},
  organization={PMLR}
}

@inproceedings{kumar2023effect,
  title={The effect of diversity in meta-learning},
  author={Kumar, Ramnath and Deleu, Tristan and Bengio, Yoshua},
  booktitle={Proceedings of the AAAI Conference on Artificial Intelligence},
  volume={37},
  number={7},
  pages={8396--8404},
  year={2023}
}

@article{shao2024deepseekmath,
  title={Deepseekmath: Pushing the limits of mathematical reasoning in open language models},
  author={Shao, Zhihong and Wang, Peiyi and Zhu, Qihao and Xu, Runxin and Song, Junxiao and Bi, Xiao and Zhang, Haowei and Zhang, Mingchuan and Li, YK and Wu, Yang and others},
  journal={arXiv preprint arXiv:2402.03300},
  year={2024}
}

@article{achiam2023gpt,
  title={Gpt-4 technical report},
  author={Achiam, Josh and Adler, Steven and Agarwal, Sandhini and Ahmad, Lama and Akkaya, Ilge and Aleman, Florencia Leoni and Almeida, Diogo and Altenschmidt, Janko and Altman, Sam and Anadkat, Shyamal and others},
  journal={arXiv preprint arXiv:2303.08774},
  year={2023}
}

@book{sutton2018reinforcement,
  title={Reinforcement learning: An introduction},
  author={Sutton, Richard S and Barto, Andrew G},
  year={2018},
  publisher={MIT press}
}

@article{rockafellar2000optimization,
  title={Optimization of conditional value-at-risk},
  author={Rockafellar, R Tyrrell and Uryasev, Stanislav and others},
  journal={Journal of risk},
  volume={2},
  pages={21--42},
  year={2000}
}

@article{chi2024unveiling,
  title={Unveiling causal reasoning in large language models: Reality or mirage?},
  author={Chi, Haoang and Li, He and Yang, Wenjing and Liu, Feng and Lan, Long and Ren, Xiaoguang and Liu, Tongliang and Han, Bo},
  journal={Advances in Neural Information Processing Systems},
  volume={37},
  pages={96640--96670},
  year={2024}
}

@article{chi2024does,
  title={Does Confusion Really Hurt Novel Class Discovery?},
  author={Chi, Haoang and Yang, Wenjing and Liu, Feng and Lan, Long and Qin, Tao and Han, Bo},
  journal={International Journal of Computer Vision},
  volume={132},
  number={8},
  pages={3191--3207},
  year={2024},
  publisher={Springer}
}

@misc{wang2025modelpredictivetasksampling,
      title={Model Predictive Task Sampling for Efficient and Robust Adaptation}, 
      author={Qi Cheems Wang and Zehao Xiao and Yixiu Mao and Yun Qu and Jiayi Shen and Yiqin Lv and Xiangyang Ji},
      year={2025},
      eprint={2501.11039},
      archivePrefix={arXiv},
      primaryClass={cs.LG},
      url={https://arxiv.org/abs/2501.11039}, 
}

@phdthesis{chow2017risk,
  title={Risk-sensitive and data-driven sequential decision making},
  author={Chow, Yinlam},
  year={2017},
  school={Stanford University}
}

@article{chow2018risk,
  title={Risk-constrained reinforcement learning with percentile risk criteria},
  author={Chow, Yinlam and Ghavamzadeh, Mohammad and Janson, Lucas and Pavone, Marco},
  journal={Journal of Machine Learning Research},
  volume={18},
  number={167},
  pages={1--51},
  year={2018}
}

@article{rigter2021risk,
  title={Risk-averse bayes-adaptive reinforcement learning},
  author={Rigter, Marc and Lacerda, Bruno and Hawes, Nick},
  journal={Advances in Neural Information Processing Systems},
  volume={34},
  pages={1142--1154},
  year={2021}
}

@inproceedings{pan2019risk,
  title={Risk averse robust adversarial reinforcement learning},
  author={Pan, Xinlei and Seita, Daniel and Gao, Yang and Canny, John},
  booktitle={2019 International Conference on Robotics and Automation (ICRA)},
  pages={8522--8528},
  year={2019},
  organization={IEEE}
}

@article{tamar2015policy,
  title={Policy gradient for coherent risk measures},
  author={Tamar, Aviv and Chow, Yinlam and Ghavamzadeh, Mohammad and Mannor, Shie},
  journal={Advances in neural information processing systems},
  volume={28},
  year={2015}
}

@article{chow2015risk,
  title={Risk-sensitive and robust decision-making: a cvar optimization approach},
  author={Chow, Yinlam and Tamar, Aviv and Mannor, Shie and Pavone, Marco},
  journal={Advances in neural information processing systems},
  volume={28},
  year={2015}
}

@article{greenberg2022efficient,
  title={Efficient risk-averse reinforcement learning},
  author={Greenberg, Ido and Chow, Yinlam and Ghavamzadeh, Mohammad and Mannor, Shie},
  journal={Advances in Neural Information Processing Systems},
  volume={35},
  pages={32639--32652},
  year={2022}
}

@inproceedings{ritter2018been,
  title={Been there, done that: Meta-learning with episodic recall},
  author={Ritter, Samuel and Wang, Jane and Kurth-Nelson, Zeb and Jayakumar, Siddhant and Blundell, Charles and Pascanu, Razvan and Botvinick, Matthew},
  booktitle={International conference on machine learning},
  pages={4354--4363},
  year={2018},
  organization={PMLR}
}

@article{tiboni2023domain,
  title={Domain Randomization via Entropy Maximization},
  author={Tiboni, Gabriele and Klink, Pascal and Peters, Jan and Tommasi, Tatiana and D'Eramo, Carlo and Chalvatzaki, Georgia},
  journal={arXiv preprint arXiv:2311.01885},
  year={2023}
}

@article{beck2023survey,
  title={A survey of meta-reinforcement learning},
  author={Beck, Jacob and Vuorio, Risto and Liu, Evan Zheran and Xiong, Zheng and Zintgraf, Luisa and Finn, Chelsea and Whiteson, Shimon},
  journal={arXiv preprint arXiv:2301.08028},
  year={2023}
}

@inproceedings{muratore2018domain,
  title={Domain randomization for simulation-based policy optimization with transferability assessment},
  author={Muratore, Fabio and Treede, Felix and Gienger, Michael and Peters, Jan},
  booktitle={Conference on Robot Learning},
  pages={700--713},
  year={2018},
  organization={PMLR}
}

@article{muratore2021data,
  title={Data-efficient domain randomization with bayesian optimization},
  author={Muratore, Fabio and Eilers, Christian and Gienger, Michael and Peters, Jan},
  journal={IEEE Robotics and Automation Letters},
  volume={6},
  number={2},
  pages={911--918},
  year={2021},
  publisher={IEEE}
}

@article{zintgraf2019varibad,
  title={Varibad: A very good method for bayes-adaptive deep rl via meta-learning},
  author={Zintgraf, Luisa and Shiarlis, Kyriacos and Igl, Maximilian and Schulze, Sebastian and Gal, Yarin and Hofmann, Katja and Whiteson, Shimon},
  journal={arXiv preprint arXiv:1910.08348},
  year={2019}
}

@article{wang2022learning,
  title={Learning expressive meta-representations with mixture of expert neural processes},
  author={Wang, Qi and Van Hoof, Herke},
  journal={Advances in neural information processing systems},
  volume={35},
  pages={26242--26255},
  year={2022}
}

@inproceedings{wang2022model,
  title={Model-based meta reinforcement learning using graph structured surrogate models and amortized policy search},
  author={Wang, Qi and Van Hoof, Herke},
  booktitle={International Conference on Machine Learning},
  pages={23055--23077},
  year={2022},
  organization={PMLR}
}

@inproceedings{xu2021enhancing,
  title={Enhancing context-based meta-reinforcement learning algorithms via an efficient task encoder (student abstract)},
  author={Xu, Feng and Jiang, Shengyi and Yin, Hao and Zhang, Zongzhang and Yu, Yang and Li, Ming and Li, Dong and Liu, Wulong},
  booktitle={Proceedings of the AAAI Conference on Artificial Intelligence},
  volume={35},
  number={18},
  pages={15937--15938},
  year={2021}
}

@inproceedings{rakelly2019efficient,
  title={Efficient off-policy meta-reinforcement learning via probabilistic context variables},
  author={Rakelly, Kate and Zhou, Aurick and Finn, Chelsea and Levine, Sergey and Quillen, Deirdre},
  booktitle={International conference on machine learning},
  pages={5331--5340},
  year={2019},
  organization={PMLR}
}

@article{zheng2025act,
  title={Act only when it pays: Efficient reinforcement learning for llm reasoning via selective rollouts},
  author={Zheng, Haizhong and Zhou, Yang and Bartoldson, Brian R and Kailkhura, Bhavya and Lai, Fan and Zhao, Jiawei and Chen, Beidi},
  journal={arXiv preprint arXiv:2506.02177},
  year={2025}
}

@inproceedings{qu2026can,
  title={Can prompt difficulty be online predicted for accelerating rl finetuning of reasoning models?},
  author={Qu, Yun and Wang, Qi and Mao, Yixiu and Hu, Vincent Tao and Ommer, Bj{\"o}rn and Ji, Xiangyang},
  booktitle={Proceedings of the 32nd ACM SIGKDD Conference on Knowledge Discovery and Data Mining V. 1},
  pages={1240--1250},
  year={2026}
}

@inproceedings{rempe2022generating,
  title={Generating useful accident-prone driving scenarios via a learned traffic prior},
  author={Rempe, Davis and Philion, Jonah and Guibas, Leonidas J and Fidler, Sanja and Litany, Or},
  booktitle={Proceedings of the IEEE/CVF Conference on Computer Vision and Pattern Recognition},
  pages={17305--17315},
  year={2022}
}

@inproceedings{carpin2016risk,
  title={Risk aversion in finite Markov Decision Processes using total cost criteria and average value at risk},
  author={Carpin, Stefano and Chow, Yin-Lam and Pavone, Marco},
  booktitle={2016 ieee international conference on robotics and automation (icra)},
  pages={335--342},
  year={2016},
  organization={IEEE}
}

@article{wang2024simple,
  title={A simple yet effective strategy to robustify the meta learning paradigm},
  author={Wang, Qi and Lv, Yiqin and Xie, Zheng and Huang, Jincai and others},
  journal={Advances in Neural Information Processing Systems},
  volume={36},
  year={2024}
}

@article{jiang2021replay,
  title={Replay-guided adversarial environment design},
  author={Jiang, Minqi and Dennis, Michael and Parker-Holder, Jack and Foerster, Jakob and Grefenstette, Edward and Rockt{\"a}schel, Tim},
  journal={Advances in Neural Information Processing Systems},
  volume={34},
  pages={1884--1897},
  year={2021}
}

@inproceedings{koprulu2023risk,
  title={Risk-aware curriculum generation for heavy-tailed task distributions},
  author={Koprulu, Cevahir and Sim{\~a}o, Thiago D and Jansen, Nils and Topcu, Ufuk},
  booktitle={Uncertainty in Artificial Intelligence},
  pages={1132--1142},
  year={2023},
  organization={PMLR}
}

@article{dennis2020emergent,
  title={Emergent complexity and zero-shot transfer via unsupervised environment design},
  author={Dennis, Michael and Jaques, Natasha and Vinitsky, Eugene and Bayen, Alexandre and Russell, Stuart and Critch, Andrew and Levine, Sergey},
  journal={Advances in neural information processing systems},
  volume={33},
  pages={13049--13061},
  year={2020}
}

@article{rajeswaran2019meta,
  title={Meta-learning with implicit gradients},
  author={Rajeswaran, Aravind and Finn, Chelsea and Kakade, Sham M and Levine, Sergey},
  journal={Advances in neural information processing systems},
  volume={32},
  year={2019}
}

@inproceedings{abbas2022sharp,
  title={Sharp-maml: Sharpness-aware model-agnostic meta learning},
  author={Abbas, Momin and Xiao, Quan and Chen, Lisha and Chen, Pin-Yu and Chen, Tianyi},
  booktitle={International conference on machine learning},
  pages={10--32},
  year={2022},
  organization={PMLR}
}

@article{finn2018probabilistic,
  title={Probabilistic model-agnostic meta-learning},
  author={Finn, Chelsea and Xu, Kelvin and Levine, Sergey},
  journal={Advances in neural information processing systems},
  volume={31},
  year={2018}
}

@inproceedings{wang2022bridge,
  title={Bridge the inference gaps of neural processes via expectation maximization},
  author={Wang, Qi and Federici, Marco and van Hoof, Herke},
  booktitle={The Eleventh International Conference on Learning Representations},
  year={2022}
}

@article{greenberg2024train,
  title={Train hard, fight easy: Robust meta reinforcement learning},
  author={Greenberg, Ido and Mannor, Shie and Chechik, Gal and Meirom, Eli},
  journal={Advances in Neural Information Processing Systems},
  volume={36},
  year={2024}
}

@article{hospedales2021meta,
  title={Meta-learning in neural networks: A survey},
  author={Hospedales, Timothy and Antoniou, Antreas and Micaelli, Paul and Storkey, Amos},
  journal={IEEE transactions on pattern analysis and machine intelligence},
  volume={44},
  number={9},
  pages={5149--5169},
  year={2021},
  publisher={IEEE}
}

@inproceedings{
lv2024theoretical,
title={Theoretical Investigations and Practical Enhancements on Tail Task Risk Minimization in Meta Learning},
author={Yiqin Lv and Cheems Wang and Dong Liang and Zheng Xie},
booktitle={The Thirty-eighth Annual Conference on Neural Information Processing Systems},
year={2024},
url={https://openreview.net/forum?id=McrzOo0hwr}
}

@article{ouyang2022training,
  title={Training language models to follow instructions with human feedback},
  author={Ouyang, Long and Wu, Jeffrey and Jiang, Xu and Almeida, Diogo and Wainwright, Carroll and Mishkin, Pamela and Zhang, Chong and Agarwal, Sandhini and Slama, Katarina and Ray, Alex and others},
  journal={Advances in neural information processing systems},
  volume={35},
  pages={27730--27744},
  year={2022}
}

@inproceedings{fujimoto2018addressing,
  title={Addressing function approximation error in actor-critic methods},
  author={Fujimoto, Scott and Hoof, Herke and Meger, David},
  booktitle={International conference on machine learning},
  pages={1587--1596},
  year={2018},
  organization={PMLR}
}

@article{kingma2013auto,
  title={Auto-encoding variational bayes},
  author={Kingma, Diederik P and Welling, Max},
  journal={arXiv preprint arXiv:1312.6114},
  year={2013}
}

@article{collins2020task,
  title={Task-robust model-agnostic meta-learning},
  author={Collins, Liam and Mokhtari, Aryan and Shakkottai, Sanjay},
  journal={Advances in Neural Information Processing Systems},
  volume={33},
  pages={18860--18871},
  year={2020}
}

@article{bai2022training,
  title={Training a helpful and harmless assistant with reinforcement learning from human feedback},
  author={Bai, Yuntao and Jones, Andy and Ndousse, Kamal and Askell, Amanda and Chen, Anna and DasSarma, Nova and Drain, Dawn and Fort, Stanislav and Ganguli, Deep and Henighan, Tom and others},
  journal={arXiv preprint arXiv:2204.05862},
  year={2022}
}

@article{vapnik1998statistical,
  title={Statistical learning theory},
  author={Vapnik, Vladimir Naumovich},
  year={1998},
  publisher={wiley New York}
}

@inproceedings{sagawa2019distributionally,
  title={Distributionally Robust Neural Networks},
  author={Sagawa, Shiori and Koh, Pang Wei and Hashimoto, Tatsunori B and Liang, Percy},
  booktitle={International Conference on Learning Representations},
  year={2019}
}

@article{mnih2015human,
  title={Human-level control through deep reinforcement learning},
  author={Mnih, Volodymyr and Kavukcuoglu, Koray and Silver, David and Rusu, Andrei A and Veness, Joel and Bellemare, Marc G and Graves, Alex and Riedmiller, Martin and Fidjeland, Andreas K and Ostrovski, Georg and others},
  journal={nature},
  volume={518},
  number={7540},
  pages={529--533},
  year={2015},
  publisher={Nature Publishing Group}
}

@inproceedings{tobin2017domain,
  title={Domain randomization for transferring deep neural networks from simulation to the real world},
  author={Tobin, Josh and Fong, Rachel and Ray, Alex and Schneider, Jonas and Zaremba, Wojciech and Abbeel, Pieter},
  booktitle={2017 IEEE/RSJ international conference on intelligent robots and systems (IROS)},
  pages={23--30},
  year={2017},
  organization={IEEE}
}

@article{wang2024robust,
  title={Robust Fast Adaptation from Adversarially Explicit Task Distribution Generation},
  author={Wang, Cheems and Lv, Yiqin and Mao, Yixiu and Qu, Yun and Xu, Yi and Ji, Xiangyang},
  journal={arXiv preprint arXiv:2407.19523},
  year={2024}
}

@article{qu2025fast,
  title={Fast and Robust: Task Sampling with Posterior and Diversity Synergies for Adaptive Decision-Makers in Randomized Environments},
  author={Qu, Yun and Wang, Qi Cheems and Mao, Yixiu and Lv, Yiqin and Ji, Xiangyang},
  journal={arXiv preprint arXiv:2504.19139},
  year={2025}
}

@misc{wei2023cot,
      title={Chain-of-Thought Prompting Elicits Reasoning in Large Language Models}, 
      author={Jason Wei and Xuezhi Wang and Dale Schuurmans and Maarten Bosma and Brian Ichter and Fei Xia and Ed Chi and Quoc Le and Denny Zhou},
      year={2023},
      eprint={2201.11903},
      archivePrefix={arXiv},
      primaryClass={cs.CL},
      url={https://arxiv.org/abs/2201.11903}, 
}

@misc{variationalinference,
      title={An Introduction to Variational Inference}, 
      author={Ankush Ganguly and Samuel W. F. Earp},
      year={2021},
      eprint={2108.13083},
      archivePrefix={arXiv},
      primaryClass={cs.LG},
      url={https://arxiv.org/abs/2108.13083}, 
}

@misc{yangqwen3,
      title={Qwen3 Technical Report}, 
      author={An Yang and Anfeng Li and Baosong Yang and others},
      year={2025},
      eprint={2505.09388},
      archivePrefix={arXiv},
      primaryClass={cs.CL},
      url={https://arxiv.org/abs/2505.09388}, 
}

@misc{bae2026online,
      title={Online Difficulty Filtering for Reasoning Oriented Reinforcement Learning}, 
      author={Sanghwan Bae and Jiwoo Hong and Min Young Lee and Hanbyul Kim and JeongYeon Nam and Donghyun Kwak},
      year={2026},
      eprint={2504.03380},
      archivePrefix={arXiv},
      primaryClass={cs.CL},
      url={https://arxiv.org/abs/2504.03380}, 
}

@inproceedings{hokoff,
  title={Hokoff: Real Game Dataset from Honor of Kings and its Offline Reinforcement Learning Benchmarks},
  author={Qu, Yun and Wang, Boyuan and Shao, Jianzhun and others},
  booktitle={NeurIPS},
  year={2023}
}

@article{cfcql,
  title={Counterfactual conservative Q learning for offline multi-agent reinforcement learning},
  author={Shao, Jianzhun and Qu, Yun and Chen, Chen and Zhang, Hongchang and Ji, Xiangyang},
  journal={NeurIPS},
  volume={36},
  year={2024}
}

@inproceedings{wang2025robust,
  title={Robust fast adaptation from adversarially explicit task distribution generation},
  author={Wang, Qi and Lv, Yiqin and Mao, Yixiu and Qu, Yun and Xu, Yi and Ji, Xiangyang},
  booktitle={Proceedings Of The 31st ACM SIGKDD Conference On Knowledge Discovery And Data Mining V. 1},
  pages={1481--1491},
  year={2025}
}

@inproceedings{dps,
  title={Dynamics-predictive sampling for active rl finetuning of large reasoning models},
  author={Mao, Yixiu and Qu, Yun and Wang, Qi and Zou, Heming and Ji, Xiangyang},
  booktitle={International Conference on Learning Representations},
  volume={2026},
  pages={25242--25270},
  year={2026}
}

@article{gps,
  title={Small generalizable prompt predictive models can steer efficient rl post-training of large reasoning models},
  author={Qu, Yun and Wang, Qi and Mao, Yixiu and Zou, Heming and Jiang, Yuhang and Liu, Weijie and Bai, Clive and Yang, Kai and Chen, Yangkun and Yang, Saiyong and others},
  journal={arXiv preprint arXiv:2602.01970},
  year={2026}
}

@article{trace,
  title={TRACE: A Unified Rollout Budget Allocation Framework for Efficient Agentic Reinforcement Learning},
  author={Zou, Heming and Wang, Qi and Qu, Yun and Jiang, Yuhang and Cai, Lizhou and Mao, Yixiu and Peng, Ru and Xu, Xin and Liu, Weijie and Yang, Kai and others},
  journal={arXiv preprint arXiv:2606.11119},
  year={2026}
}

@article{popo,
  title={RLVR without Ineffective Samples: Group Prioritized Off-Policy Optimization for LLM Reasoning},
  author={Mao, Yixiu and Qu, Yun and Wang, Qi and Zou, Heming and Ji, Xiangyang},
  journal={arXiv preprint arXiv:2606.01281},
  year={2026}
}

@article{lpo,
  title={Listwise Policy Optimization: Group-based RLVR as Target-Projection on the LLM Response Simplex},
  author={Qu, Yun and Wang, Qi and Mao, Yixiu and Zou, Heming and Jiang, Yuhang and Li, Yingyue and Xu, Wutong and Cai, Lizhou and Liu, Weijie and Bai, Clive and others},
  journal={arXiv preprint arXiv:2605.06139},
  year={2026}
}

@InProceedings{pmlr-v267-li25g,
  title = 	 {Transformer-Based Spatial-Temporal Counterfactual Outcomes Estimation},
  author =       {Li, He and Chi, Haoang and Liu, Mingyu and Huang, Wanrong and Xu, Liyang and Yang, Wenjing},
  booktitle = 	 {International Conference on Machine Learning},
  pages = 	 {34185--34208},
  year = 	 {2025},
  volume = 	 {267},
}

@article{yang2025nt,
  title={NT-FAN: A simple yet effective noise-tolerant few-shot adaptation network},
  author={Yang, Wenjing and Chi, Haoang and Zhan, Yibing and Hu, Bowen and Ren, Xiaoguang and Tao, Dapeng and Lan, Long},
  journal={Artificial Intelligence},
  volume={346},
  pages={104363},
  year={2025},
  publisher={Elsevier}
}

@inproceedings{chi2022meta,
title={Meta Discovery: Learning to Discover Novel Classes given Very Limited Data},
author={Haoang Chi and Feng Liu and Wenjing Yang and Long Lan and Tongliang Liu and Bo Han and Gang Niu and Mingyuan Zhou and Masashi Sugiyama},
booktitle={International Conference on Learning Representations},
year={2022}
}

\appendix

\section{Related Work}
\label{appendix:related_work}

\paragraph{Risk-sensitive optimization for adaptive sequential decision-making.}
Risk considerations have long been central to sequential decision-making, where poor adaptation may lead to severe downstream failures~\citep{chow2015risk, chow2017risk, chow2018risk,taolatent2026,pmlr-v267-li25g}. This has motivated a broad line of work on incorporating risk-averse objectives into policy optimization~\citep{tamar2015policy, pan2019risk, rigter2021risk, greenberg2022efficient}. In randomized environments, including DR~\citep{tobin2017domain, muratore2018domain} and Meta-RL~\citep{finn2017model, beck2023survey,wang2022learning}, several optimization principles have been adopted to improve robustness. Empirical risk minimization (ERM)~\citep{vapnik1998statistical} optimizes the average loss under the training task distribution and is therefore commonly used as a standard baseline. Distributionally robust risk minimization (DRM) instead emphasizes high-risk tasks via conditional value-at-risk (CVaR) objectives~\citep{rockafellar2000optimization, wang2024simple, lv2024theoretical, greenberg2024train}, while group distributionally robust optimization (GDRM) mitigates subpopulation shifts by reweighting group-level risks~\citep{sagawa2019distributionally}. Worst-case meta-learning has also been investigated~\citep{collins2020task}, but typically requires careful relaxations to maintain stable training. A shared limitation of these approaches is their reliance on extensive policy evaluation over large task spaces, which makes purely evaluation-driven robust optimization expensive in practice.

\paragraph{Robust active task sampling and risk-predictive models.}
Robust active task sampling (RATS) addresses the above efficiency bottleneck by replacing repeated exhaustive evaluation with a learned RPM~\citep{wang2025modelpredictivetasksampling}. Instead of executing the current policy on every candidate task, RATS estimates task risk through the RPM and then selects an informative or high-risk subset for policy optimization. MPTS~\citep{wang2025modelpredictivetasksampling} instantiates this idea by modeling the optimization trajectory as a generative sequence and using streaming variational inference to predict adaptation risk on unseen tasks. More recently, PDTS~\citep{qu2025fast} reinterprets RATS from a task-selection perspective, formulates subset selection as an infinitely many-armed bandit problem, and combines posterior sampling with diversity regularization to alleviate concentration under large candidate batches. These methods substantially improve the efficiency of robust task selection; however, their predictive modules remain challenged by two factors: early-stage histories provide only sparse supervision, and risk estimates are inherently delayed because predictions are made before the next policy update is observed. Our method focuses on these RPM-level limitations by introducing in-context sequence modeling and a one-step look-ahead prediction mechanism.

\paragraph{Cross-task adaptation and curriculum learning.}
Zero-shot and few-shot adaptation are two central evaluation regimes for adaptive decision-makers. DR aims to obtain zero-shot generalization by training over randomized environments~\citep{tobin2017domain, muratore2018domain, mehta2020active, muratore2021data, tiboni2023domain, yang2025nt, chi2022meta, chi2024does, chi2024unveiling}, whereas meta-learning seeks reusable priors that enable rapid adaptation from limited experience~\citep{hospedales2021meta}. Representative Meta-RL methods include optimization-based algorithms that learn initializations~\citep{finn2017model, finn2018probabilistic, abbas2022sharp, rajeswaran2019meta}, context-based approaches that infer latent task representations from support trajectories~\citep{rakelly2019efficient, zintgraf2019varibad, wang2022bridge, xu2021enhancing}, and recurrent architectures that internalize adaptation through memory~\citep{ritter2018been}. Curriculum learning provides a complementary mechanism by changing which tasks are presented during training. For example, unsupervised environment design and its variants generate tasks with increasing challenge~\citep{dennis2020emergent, jiang2021replay, koprulu2023risk}, while adversarial task-distribution methods construct curricula through learned generative distributions~\citep{wang2024robust}. Our work is orthogonal to such curriculum-design strategies: rather than modifying the task distribution itself, we improve the accuracy and timeliness of risk estimation under a given distribution, making the approach potentially compatible with existing curricula.

\paragraph{Reinforcement learning for LLM post-training.}
Reinforcement learning from human feedback (RLHF) has become a standard approach for aligning large language models with human preferences~\citep{ouyang2022training, bai2022training}. Recent post-training pipelines further use reinforcement learning to improve reasoning and task-solving capabilities~\citep{mao2026rlvr,qu2026listwise}, where prompts can be viewed as task instances and responses are scored by reward models or verifiers~\citep{shao2024deepseekmath}. Prompt-level curricula that emphasize informative or challenging examples have also been explored to improve training efficiency~\citep{gao2026prompt,popo,trace}. A practical obstacle is that estimating prompt difficulty usually requires costly rollouts from the current model. Our framework can be naturally mapped to this setting by treating prompts as task identifiers and using the RPM to estimate prompt-level difficulty, thereby supporting curriculum construction for LLM post-training without additional generation overhead.

\section{Expanded Proofs for Theoretical Analysis}
\label{app:proofs}

This appendix contains the full proofs of all theoretical results stated in the main text.
We first collect notation, then prove Theorem~\ref{thm:vanilla_bound} and Proposition~\ref{prop:lookahead-mismatch}.

\paragraph{Notation.}
$\mathcal{T}$: task space; $\Theta$: parameter space; $\boldsymbol{\theta}_t \in \Theta$: agent parameters at step $t$; $L(\boldsymbol{\theta}, \boldsymbol{\tau})$: true risk of task $\boldsymbol{\tau} \in \mathcal{T}$ under $\boldsymbol{\theta}$; $H_t = \{(\boldsymbol{\tau}_{t,i}, \ell_{t,i})\}_{i=1}^{B}$: per-iteration history with batch size $B$; $H_{0:t} = \bigcup_{k=0}^{t} H_k$: cumulative trajectory; $\mathcal{F}$: RPM hypothesis class; $\mathfrak{R}_n(\mathcal{F})$: Rademacher complexity with $n$ samples.
The vanilla RPM $\hat{\ell}_{\mathrm{vanilla}}$ is trained on $H_t$ (size $B$).
ICRPM $\hat{\ell}_{\mathrm{ICRPM}} = f_\phi(\boldsymbol{\tau}, H_{0:t})$ is trained on $W$ consecutive batches (size $N = WB$).

\noindent\textbf{Theorem~\ref{thm:vanilla_bound}}
(Per-step Lagged Prediction Bound for Vanilla RPMs) 
Assume that $\ell(\bm{\tau})=\mathcal{L}(\bm{\theta},\bm{\tau})\in[0,M]$ for all
$\bm{\theta}\in\Theta$, $\bm{\tau}\in\mathcal T$, and that
$\ell(\bm{\tau})$ is $L$-Lipschitz in $\bm{\theta}$. The policy update uses a normalized average loss computed by i.i.d. samples and
$\|\nabla_{\bm{\theta}}\ell_t(\bm{\tau}_i)\|\le G$.
Let $\hat\ell_{\bm{\psi},\bm{\phi},t}(\bm{\tau})$ denote the predictive risk
function induced by the vanilla RPM at iteration $t$. Define
$
    \epsilon_{\mathrm{app}}^{\mathrm{rec}}:=
    \inf_{\bm{\psi},\bm{\phi}}
    \mathbb E_{\bm{\tau}\sim p}
    \bigl[
        \bigl(\hat\ell_{\bm{\psi},\bm{\phi},t}(\bm{\tau})
        -\ell_t(\bm{\tau})\bigr)^2
    \bigr].
$
Then, for any $\delta\in(0,1)$, with probability at least $1-\delta$ over the
sampling of $H_t$,
\begin{equation}
    \mathbb E_{\bm{\tau}\sim p}
    \bigl[|\hat\ell_t(\bm{\tau})-\ell_{t+1}(\bm{\tau})|\bigr]
    \le
    \underbrace{
    \sqrt{\epsilon_{\mathrm{app}}^{\mathrm{rec}}
    +\epsilon_{\mathrm{gen}}^{\mathrm{rec}}(B,\delta)}
    }_{\textcircled{\small{1}}\text{Reconstruction Approximation Error}}
    \qquad
    +
    \underbrace{
    L\eta G
    }_{\textcircled{\small{2}}\text{Temporal Lagging Error}} .
\end{equation}
If the reconstruction loss class
$
    \mathcal G^{\mathrm{rec}}
    =
    \{(\bm{\tau},y)\mapsto
    (\hat\ell_{\bm{\psi},\bm{\phi},t}(\bm{\tau})-y)^2:
    \bm{\psi}\in\Psi,\bm{\phi}\in\Phi\}
$
has Rademacher complexity $\mathfrak R_B(\mathcal G^{\mathrm{rec}})$ and is bounded by $M^2$, then
$
\epsilon_{\mathrm{gen}}^{\mathrm{rec}}(B,\delta)
=
4\mathfrak R_B(\mathcal G^{\mathrm{rec}})
+
2M^2\sqrt{\frac{\ln(2/\delta)}{2B}}.
$

\begin{proof}
Assume the RPM uses a fixed-variance Gaussian decoder
$p_{\bm{\psi}}(\ell\vert\bm{\tau},\bm{z})=\mathcal N(\mu_{\bm{\psi}}(\bm{\tau},\bm{z}),\sigma^2)$,
and define the predictive mean as
$\hat\ell_{\bm{\psi},\bm{\phi},t}(\bm{\tau})
:=
\mathbb E_{q_{\bm{\phi}}(\bm{z}_t\vert H_t)}
[\mu_{\bm{\psi}}(\bm{\tau},\bm{z}_t)]$.
For such a decoder, the reconstruction term in the negative log-likelihood is proportional to squared reconstruction error:
\begin{equation}
    -\log p_{\bm{\psi}}(\ell_t(\bm{\tau})\vert\bm{\tau},\bm{z}_t)
    =
    \frac{1}{2\sigma^2}
    \bigl(\mu_{\bm{\psi}}(\bm{\tau},\bm{z}_t)-\ell_t(\bm{\tau})\bigr)^2
    + C_{\sigma},
\end{equation}
where $C_{\sigma}$ is independent of $(\bm{\psi},\bm{\phi})$.
Therefore, the reconstruction component of the ELBO in Eq.~(2) induces the squared reconstruction objective analyzed below, while the KL term acts as a latent regularizer and is absorbed into the approximation and optimization error.

Let
$
    f_{\bm{\psi},\bm{\phi},t}(\bm{\tau})
    :=
    \hat\ell_{\bm{\psi},\bm{\phi},t}(\bm{\tau})
$
and define the empirical and population squared reconstruction risks as
\begin{equation}
    \widehat{\mathcal R}_t(f_{\bm{\psi},\bm{\phi},t})
    :=
    \frac{1}{B}
    \sum_{i=1}^B
    \bigl(
        f_{\bm{\psi},\bm{\phi},t}(\bm{\tau}_{t,i})
        -
        \ell_t(\bm{\tau}_{t,i})
    \bigr)^2 ,
\end{equation}
and
\begin{equation}
    \mathcal R_t(f_{\bm{\psi},\bm{\phi},t})
    :=
    \mathbb E_{\bm{\tau}\sim p}
    \bigl[
        \bigl(
        f_{\bm{\psi},\bm{\phi},t}(\bm{\tau})
        -
        \ell_t(\bm{\tau})
        \bigr)^2
    \bigr] .
\end{equation}
By the standard Rademacher uniform convergence bound applied to
$\mathcal G^{\mathrm{rec}}$, with probability at least $1-\delta$, uniformly over
$(\bm{\psi},\bm{\phi})\in\Psi\times\Phi$,
\begin{equation}
    \mathcal R_t(f_{\bm{\psi},\bm{\phi},t})
    \le
    \widehat{\mathcal R}_t(f_{\bm{\psi},\bm{\phi},t})
    +
    2\mathfrak R_B(\mathcal G^{\mathrm{rec}})
    +
    M^2\sqrt{\frac{\ln(2/\delta)}{2B}} .
\end{equation}
Let $(\hat{\bm{\psi}},\hat{\bm{\phi}})$ denote the parameters obtained by minimizing the empirical squared reconstruction objective, and let
$f^*_{\bm{\psi},\bm{\phi},t}$ be a population minimizer in the class.
Applying the above inequality to the empirical minimizer gives
\begin{equation}
\begin{aligned}
    \mathcal R_t(\hat\ell_t)
    &\le
    \widehat{\mathcal R}_t(\hat\ell_t)
    +
    2\mathfrak R_B(\mathcal G^{\mathrm{rec}})
    +
    M^2\sqrt{\frac{\ln(2/\delta)}{2B}}  \\
    &\le
    \widehat{\mathcal R}_t(f^*_{\bm{\psi},\bm{\phi},t})
    +
    2\mathfrak R_B(\mathcal G^{\mathrm{rec}})
    +
    M^2\sqrt{\frac{\ln(2/\delta)}{2B}} .
\end{aligned}
\end{equation}
Applying the uniform convergence bound once more to
$f^*_{\bm{\psi},\bm{\phi},t}$ yields
\begin{equation}
    \widehat{\mathcal R}_t(f^*_{\bm{\psi},\bm{\phi},t})
    \le
    \mathcal R_t(f^*_{\bm{\psi},\bm{\phi},t})
    +
    2\mathfrak R_B(\mathcal G^{\mathrm{rec}})
    +
    M^2\sqrt{\frac{\ln(2/\delta)}{2B}} .
\end{equation}
Combining the last two inequalities and using the definition of
$\epsilon_{\mathrm{app}}^{\mathrm{rec}}$ gives
\begin{equation}
\begin{aligned}
    \mathcal R_t(\hat\ell_t)
    &\le
    \epsilon_{\mathrm{app}}^{\mathrm{rec}}
    +
    4\mathfrak R_B(\mathcal G^{\mathrm{rec}})
    +
    2M^2\sqrt{\frac{\ln(2/\delta)}{2B}}  \\
    &=
    \epsilon_{\mathrm{app}}^{\mathrm{rec}}
    +
    \epsilon_{\mathrm{gen}}^{\mathrm{rec}}(B,\delta).
\end{aligned}
\end{equation}
Thus,
\begin{equation}
    \mathbb E_{\bm{\tau}\sim p}
    \bigl[
        (\hat\ell_t(\bm{\tau})-\ell_t(\bm{\tau}))^2
    \bigr]
    \le
    \epsilon_{\mathrm{app}}^{\mathrm{rec}}
    +
    \epsilon_{\mathrm{gen}}^{\mathrm{rec}}(B,\delta).
\end{equation}

It remains to account for the temporal shift from $\bm{\theta}_t$ to
$\bm{\theta}_{t+1}$. By the $L_2$ triangle inequality and Jensen's inequality,
\begin{equation}
\begin{aligned}
    &\mathbb E_{\bm{\tau}\sim p}
    \bigl[|\hat\ell_t(\bm{\tau})-\ell_{t+1}(\bm{\tau})|\bigr] \\
    &\le
    \bigl(
    \mathbb E_{\bm{\tau}\sim p}
    [
    (\hat\ell_t(\bm{\tau})-\ell_{t+1}(\bm{\tau}))^2
    ]
    \bigr)^{1/2} \\
    &\le
    \bigl(
    \mathbb E_{\bm{\tau}\sim p}
    [
    (\hat\ell_t(\bm{\tau})-\ell_t(\bm{\tau}))^2
    ]
    \bigr)^{1/2}
    +
    \bigl(
    \mathbb E_{\bm{\tau}\sim p}
    [
    (\ell_t(\bm{\tau})-\ell_{t+1}(\bm{\tau}))^2
    ]
    \bigr)^{1/2}.
\end{aligned}
\end{equation}
The first term is controlled by the reconstruction bound above. For the second term, since $\ell(\bm{\tau})$ is $L$-Lipschitz in $\bm{\theta}$,
\begin{equation}
    |\ell_t(\bm{\tau})-\ell_{t+1}(\bm{\tau})|
    \le
    L\|\bm{\theta}_{t+1}-\bm{\theta}_t\|.
\end{equation}
The normalized policy update and the gradient bound imply
\begin{equation}
\begin{aligned}
    \|\bm{\theta}_{t+1}-\bm{\theta}_t\|
    &=
    \left\|
    \eta\frac{1}{B}
    \sum_{i=1}^B
    \nabla_{\bm{\theta}}\ell_t(\bm{\tau}_i)
    \right\|  \\
    &\le
    \eta\frac{1}{B}
    \sum_{i=1}^B
    \|\nabla_{\bm{\theta}}\ell_t(\bm{\tau}_i)\|
    \le
    \eta G .
\end{aligned}
\end{equation}
Therefore,
\begin{equation}
    \bigl(
    \mathbb E_{\bm{\tau}\sim p}
    [
    (\ell_t(\bm{\tau})-\ell_{t+1}(\bm{\tau}))^2
    ]
    \bigr)^{1/2}
    \le
    L\eta G .
\end{equation}
Substituting the two bounds gives
\begin{equation}
\begin{aligned}
    \mathbb E_{\bm{\tau}\sim p}
    \bigl[|\hat\ell_t(\bm{\tau})-\ell_{t+1}(\bm{\tau})|\bigr]
    &\le
    \sqrt{
        \epsilon_{\mathrm{app}}^{\mathrm{rec}}
        +
        \epsilon_{\mathrm{gen}}^{\mathrm{rec}}(B,\delta)
    }
    +
    L\eta G ,
\end{aligned}
\end{equation}
which completes the proof.
\end{proof}

\subsection{Proof of Proposition~\ref{prop:lookahead-mismatch}}
\noindent\textbf{Proposition~\ref{prop:lookahead-mismatch}
(Look-ahead reduces temporal mismatch, restated).}
\emph{
Let $\bm{\Delta}_t=\bm{\theta}_{t+1}-\bm{\theta}_t$ with
$\|\bm{\Delta}_t\|\leq \eta G$, and assume that
$\ell(\bm{\theta},\bm{\tau})$ is $\beta$-smooth in $\bm{\theta}$.
For analysis, define the first-order look-ahead risk as
$m_t(\bm{\tau})=\ell_t(\bm{\tau})+
\nabla_{\bm{\theta}}\ell(\bm{\theta}_t,\bm{\tau})^\top\bm{\Delta}_t$.
If the ICRPM predictor satisfies
$\mathbb{E}_{\bm{\tau}\sim p}
|\hat{\ell}^{\mathrm{IC}}_{t+1}(\bm{\tau})-m_t(\bm{\tau})|
\leq\epsilon_{\mathrm{IC}}$, then
}
\begin{equation}
\mathbb{E}_{\bm{\tau}\sim p}
\left[
|\hat{\ell}^{\mathrm{IC}}_{t+1}(\bm{\tau})-\ell_{t+1}(\bm{\tau})|
\right]
\leq
\epsilon_{\mathrm{IC}}
+
\frac{\beta}{2}\eta^2G^2 .
\end{equation}
\begin{proof}
For any fixed task $\bm{\tau}$, denote
$\ell_{t+1}(\bm{\tau})=\ell(\bm{\theta}_{t+1},\bm{\tau})$ and
$\ell_t(\bm{\tau})=\ell(\bm{\theta}_t,\bm{\tau})$. Since
$\ell(\bm{\theta},\bm{\tau})$ is $\beta$-smooth in $\bm{\theta}$, the standard first-order
remainder bound gives
\begin{equation}
\left|
\ell_{t+1}(\bm{\tau})
-
\ell_t(\bm{\tau})
-
\nabla_{\bm{\theta}}\ell(\bm{\theta}_t,\bm{\tau})^\top \bm{\Delta}_t
\right|
\leq
\frac{\beta}{2}\|\bm{\Delta}_t\|^2
\leq
\frac{\beta}{2}\eta^2G^2 .
\end{equation}
By the definition of $m_t(\bm{\tau})$, this implies
$|\ell_{t+1}(\bm{\tau})-m_t(\bm{\tau})|
\leq \frac{\beta}{2}\eta^2G^2$. Therefore, using the triangle inequality and taking expectation
over $\bm{\tau}\sim p$,
\begin{equation}
\mathbb{E}_{\bm{\tau}\sim p}
\left[
|\hat{\ell}^{\mathrm{IC}}_{t+1}(\bm{\tau})-\ell_{t+1}(\bm{\tau})|
\right]
\leq
\mathbb{E}_{\bm{\tau}\sim p}
\left[
|\hat{\ell}^{\mathrm{IC}}_{t+1}(\bm{\tau})-m_t(\bm{\tau})|
\right]
+
\frac{\beta}{2}\eta^2G^2
\leq
\epsilon_{\mathrm{IC}}
+
\frac{\beta}{2}\eta^2G^2 .
\end{equation}
This completes the proof.
\end{proof}

\section{Additional Preliminaries and Training Protocols}
\label{app:unified_preliminaries}

This appendix summarizes the task formulations used in our experiments, covering
zero-shot domain randomization, few-shot meta-reinforcement learning, and RL
post-training with verifiable rewards. Although these settings differ in their
interaction modalities, they share a common structure: at each iteration, a subset
of tasks or prompts is selected, evaluated by the current policy, and appended to
the optimization history for subsequent predictive sampling.

\subsection{Task Distributions in Randomized Environments}
\label{app:task_distribution}

\begin{definition}[Identifier-induced MDP family]
\label{def:identifier_mdp}
Let $p(\bm{\tau})$ be a distribution over task identifiers. Each identifier
$\bm{\tau}$ specifies an MDP
$
\mathcal{M}_{\bm{\tau}}
=
\langle
\mathcal{S},
\mathcal{A},
P_{\bm{\tau}},
R_{\bm{\tau}},
\bar{\lambda}
\rangle$, where $\mathcal{S}$ and $\mathcal{A}$ are shared state and action spaces, while
the transition kernel $P_{\bm{\tau}}(\bm{s}'\mid \bm{s},\bm{a})$ and/or the
reward function $R_{\bm{\tau}}(\bm{s},\bm{a})$ may vary with $\bm{\tau}$. For an
$H$-horizon trajectory
$\xi=(\bm{s}_0,\bm{a}_0,r_0,\ldots,\bm{s}_{H})$, the discounted return is
\begin{equation}
G_{\bm{\tau}}(\xi)=\sum_{h=0}^{H-1}\bar{\lambda}^{h}r_h .
\end{equation}
\end{definition}

This formulation covers continuous-control tasks in robotics, where
$\bm{\tau}$ may encode physical parameters, target velocities, masses, lighting
conditions, or goal positions. It also provides the task-level abstraction used by
robust task sampling, where the sampler selects identifiers whose predicted
difficulty is informative for policy improvement.

\subsection{Zero-Shot and Few-Shot Adaptive Decision-Making}
\label{app:zero_few_shot}

\paragraph{Domain randomization.}
In zero-shot domain randomization, the policy is trained over MDPs sampled from
$p(\bm{\tau})$ and is directly evaluated on unseen environments without an
adaptation phase. The standard objective is
\begin{equation}
\label{eq:dr_objective_appendix}
\max_{\bm{\theta}}
\;
\mathcal{J}_{\mathrm{DR}}(\bm{\theta})
=
\mathbb{E}_{\bm{\tau}\sim p(\bm{\tau})}
\mathbb{E}_{\xi\sim\pi_{\bm{\theta}},\mathcal{M}_{\bm{\tau}}}
\left[
\sum_{h=0}^{H-1}\bar{\lambda}^{h}r_h
\right].
\end{equation}
For risk-based sampling, we use the negative return as the task risk,
\begin{equation}
\ell(\bm{\tau};\bm{\theta})
=
-
\mathbb{E}_{\xi\sim\pi_{\bm{\theta}},\mathcal{M}_{\bm{\tau}}}
\left[
\sum_{h=0}^{H-1}\bar{\lambda}^{h}r_h
\right].
\end{equation}
In our experiments, TD3 is used for physical domain randomization
\citep{fujimoto2018addressing}.

\paragraph{Meta-RL with MAML.}
For few-shot adaptation, we use MAML \citep{finn2017model} as the meta-learning
backbone. Each task $\bm{\tau}$ provides support trajectories
$\mathcal{D}_{\bm{\tau}}^{S}$ for fast adaptation and query trajectories
$\mathcal{D}_{\bm{\tau}}^{Q}$ for post-adaptation evaluation. The inner update is
\begin{equation}
\bm{\theta}'_{\bm{\tau}}
=
\bm{\theta}
-
\eta_{\mathrm{in}}
\nabla_{\bm{\theta}}
\ell(\mathcal{D}_{\bm{\tau}}^{S};\bm{\theta}),
\end{equation}
and the meta-objective is
\begin{equation}
\label{eq:maml_objective_appendix}
\min_{\bm{\theta}}
\;
\mathbb{E}_{\bm{\tau}\sim p(\bm{\tau})}
\left[
\ell(
\mathcal{D}_{\bm{\tau}}^{Q};
\bm{\theta}'_{\bm{\tau}}
)
\right].
\end{equation}
Here, $\ell$ is instantiated as the negative query return after fast adaptation.
Thus, DR can be viewed as the zero-shot case with no support set, while Meta-RL
uses $\mathcal{D}_{\bm{\tau}}^{S}$ before evaluating
$\mathcal{D}_{\bm{\tau}}^{Q}$.

\subsection{RL Post-Training with Verifiable Rewards}
\label{app:rlvr_prompt}

We also consider RL post-training for LLM reasoning. Here, each identifier
$\bm{\tau}$ denotes a prompt, or its embedding representation, sampled from a
prompt pool $\mathcal{T}$. At iteration $t$, the policy
$\pi_{\bm{\theta}_t}$ generates $K$ responses
$\bm{y}^{\bm{\tau}}_t=\{y^{\bm{\tau}}_{t,i}\}_{i=1}^{K}$ for each selected prompt.
A verifier assigns rewards
$\bm{r}^{\bm{\tau}}_t=\{r(\bm{\tau},y^{\bm{\tau}}_{t,i})\}_{i=1}^{K}$.
The latent success rate is
\begin{equation}
\rho_t^{\bm{\tau}}
=
\mathbb{E}[r_t^{\bm{\tau}}],
\end{equation}
which measures the probability that the current policy solves prompt
$\bm{\tau}$. The optimization history is written as
\[
H_t
=
\left\{
\mathcal{T}_t^{B},
\mathcal{R}_t^{B}
\right\},
\quad
\mathcal{R}_t^{B}
=
\left\{
\bm{r}^{\bm{\tau}_{t,j}}_t
\right\}_{j=1}^{B}, 
\]
where each \(\bm r_i^{\bm\tau_{i,j}}\) denotes the reward vector over \(K\) responses for prompt \(\bm\tau_{i,j}\), and \(B\) is the number of selected prompts in the batch.
RLVR optimizes the expected verifiable reward:
\begin{equation}
\label{eq:rlvr_objective_appendix}
\max_{\bm{\theta}}
\;
\mathbb{E}_{\bm{\tau}\sim\mathcal{T},\,
y\sim\pi_{\bm{\theta}}(\cdot\mid\bm{\tau})}
\left[
r(\bm{\tau},y)
\right].
\end{equation}
In practice, we use GRPO \citep{shao2024deepseekmath}, which replaces the value
network with a group-normalized advantage. For responses generated from the same
prompt, the advantage is
\begin{equation}
\hat{A}_i
=
\frac{
r_i-\mathrm{mean}(\{r_i\}_{i=1}^{K})
}{
\mathrm{std}(\{r_i\}_{i=1}^{K})
},
\end{equation}
and the policy is updated with a clipped importance-ratio objective regularized
by the KL divergence to a reference policy.

\paragraph{Online prompt selection.}
Prompt selection is needed because exact rollout-based evaluation is expensive.
Evaluation-based methods over-sample a candidate pool
$\mathcal{T}_{t}^{\hat{B}}$ and keep prompts with non-degenerate reward signals,
for example those satisfying
$\mathrm{std}(\{r_i^{\bm{\tau}}\}_{i=1}^{K})>0$. Such filtering is effective but
requires extra generations. Prediction-based methods reduce this cost by
estimating prompt difficulty, typically through a predictive model of
$p(\rho_t^{\bm{\tau}}\mid \bm{\tau},\bm{\theta}_t)$ or its history-conditioned
variant. However, prompt-wise independent estimators may fail to share
information across prompts and may lag behind the changing policy.

\subsection{Unified Predictive Selection View}
\label{app:unified_selection}

Both robust task sampling and active prompt selection can be expressed as
selection from predicted outcomes. Let
$\hat{y}^{t}_{\bm{\tau}}$ denote the model-predicted outcome for candidate
$\bm{\tau}$. For active prompt selection, this outcome is the predicted success
rate, $\hat{y}^{t}_{\bm{\tau}}=\hat{\rho}^{t}_{\bm{\tau}}$, and the target is an
intermediate difficulty level $\rho^*$. For robust task sampling, the outcome is
the predicted risk, $\hat{y}^{t}_{\bm{\tau}}=\hat{\ell}_{t}(\bm{\tau})$, and the
target is the largest predicted risk in the candidate pool. The selected batch is
\begin{equation}
\label{eq:unified_selection_appendix}
\mathcal{T}_{t}^{B}
=
\operatorname*{arg\,top-}B_{\bm{\tau} \in \mathcal{T}_{t}^{\hat{B}}}
\left(
-\big\| \hat{y}^{t}_{\bm{\tau}} - c_t^* \big\|_2^2
\right),
\end{equation}
where $c_t^*=\rho^*$ for prompt selection and
$c_t^*=\max_{\bm{\tau}\in\mathcal{T}_{t}^{\hat{B}}}
\hat{\ell}_{t}(\bm{\tau})$ for robust task sampling.

\section{Risk-Aware Evaluation and Predictive Ranking Metrics}
\label{app:risk_metrics}

This section summarizes the metrics and regularity conditions used to evaluate
tail-task robustness and risk-prediction quality. We write the task risk induced
by a model parameterized by $\bm{\theta}$ as
\[
\ell_{\bm{\theta}}(\bm{\tau})
:=
\ell(\mathcal{D}_{\bm{\tau}}^{Q},
\mathcal{D}_{\bm{\tau}}^{S};\bm{\theta}),
\]
where $\bm{\tau}\sim p(\bm{\tau})$ is a task identifier. Depending on the
benchmark, $\ell_{\bm{\theta}}(\bm{\tau})$ is instantiated as MSE for regression,
negative return for control, or failure probability for RL post-training.

\subsection{\texorpdfstring{$\mathrm{CVaR}_{\alpha}$}{CVaR} for Tail-Task Robustness}
\label{app:cvar_metric}

\begin{definition}[\texorpdfstring{$\mathrm{CVaR}_{\alpha}$}{CVaR}]
\label{def:cvar}
Let
$
F_{\bm{\theta}}(z)
=
\Pr_{\bm{\tau}\sim p(\bm{\tau})}
\left(
\ell_{\bm{\theta}}(\bm{\tau}) \le z
\right)
$
be the cumulative distribution function of the task risk, and define its
$\alpha$-quantile as
$
q_{\alpha}(\bm{\theta})
=
\inf\{z:F_{\bm{\theta}}(z)\ge \alpha\}.
$ Following~\citep{rockafellar2000optimization}, the $\alpha$-level Conditional
Value-at-Risk is the expected risk over the upper tail:
\begin{equation}
\label{eq:cvar_def_app}
\mathrm{CVaR}_{\alpha}(\bm{\theta})
=
\mathbb{E}
\left[
\ell_{\bm{\theta}}(\bm{\tau})
\;\middle|\;
\ell_{\bm{\theta}}(\bm{\tau})\ge q_{\alpha}(\bm{\theta})
\right].
\end{equation}
Equivalently, for a continuous risk distribution,
\begin{equation}
\label{eq:cvar_tail_dist_app}
\mathrm{CVaR}_{\alpha}(\bm{\theta})
=
\int z\,dF_{\bm{\theta}}^{\alpha}(z),
\qquad
F_{\bm{\theta}}^{\alpha}(z)
=
\begin{cases}
0, & z<q_{\alpha}(\bm{\theta}), \\[1mm]
\dfrac{F_{\bm{\theta}}(z)-\alpha}{1-\alpha},
& z\ge q_{\alpha}(\bm{\theta}),
\end{cases}
\end{equation}
where $F_{\bm{\theta}}^{\alpha}$ is the normalized tail-risk distribution. We
denote the corresponding tail-task density by
$p_{\alpha}(\bm{\tau};\bm{\theta})$.
\end{definition}

Under this convention, $\mathrm{CVaR}_{0.9}$ evaluates the expected risk on the
hardest $10\%$ of tasks. Thus, lower CVaR indicates stronger tail-task robustness.
For control tasks, where risk is defined as negative return, reporting
$-\mathrm{CVaR}_{\alpha}$ gives the corresponding CVaR return, for which higher
values are better.

\subsection{Spearman Rank Correlation for Risk Prediction}
\label{app:src_metric}

While $\mathrm{CVaR}_{\alpha}$ measures the robustness of the learned policy, it
does not directly assess whether the risk-predictive model correctly identifies
hard tasks. Since predictive samplers mainly rely on the relative ordering of
candidate tasks, we additionally report Spearman Rank Correlation (SRC) between
predicted and observed risks.

Given a candidate or validation set $\{\bm{\tau}_i\}_{i=1}^{N}$, let
$\hat{\ell}_i=\hat{\ell}(\bm{\tau}_i)$ be the predicted risk and
$
\ell_i
=
\ell(\mathcal{D}_{\bm{\tau}_i}^{Q},
\mathcal{D}_{\bm{\tau}_i}^{S};\bm{\theta})
$
be the observed risk. Denoting the rank operator by $\operatorname{rk}(\cdot)$,
SRC is computed as
\begin{equation}
\label{eq:src_app}
\mathrm{SRC}
=
1
-
\frac{
6\sum_{i=1}^{N}
\left(
\operatorname{rk}(\hat{\ell}_i)
-
\operatorname{rk}(\ell_i)
\right)^2
}{
N(N^2-1)
}.
\end{equation}

SRC lies in $[-1,1]$. Values close to $1$ indicate that the predicted risk order
matches the observed task-difficulty order. This metric is particularly relevant
for predictive task sampling: even if the absolute risk values are not perfectly
calibrated, a high SRC implies that the sampler can still prioritize informative
or high-risk tasks. In contrast, low SRC indicates rank-flipping in the predicted
difficulty landscape, which may lead to suboptimal task selection and weaker
tail-task performance.

\section{Experiment Details}
\label{expdetails}
For all baselines, we follow the default configurations reported in their original papers whenever available, including model architectures, optimizer settings, task-selection rules, and hyperparameters. When a baseline is adapted to our shared MPTS-style evaluation protocol, we keep the training budget, candidate-pool size, batch size, number of seeds, and evaluation schedule consistent with ICRPM for fair comparison.
\subsection{Meta-Reinforcement Learning Experiments}
\label{app:metarl}

\paragraph{Shared Setup}
We evaluate on three MuJoCo Meta-RL benchmarks (\textbf{HalfCheetahVel},
\textbf{Walker2dVel}, \textbf{ReacherPos}) from the MAML task-distribution suite.
Tasks are parameterised by continuous descriptors $\boldsymbol{\tau}$
(target velocity or goal position) sampled from $p(\bm{\tau})$ at each iteration.
All methods share the same meta-learner: \textbf{MAML} with TRPO as the inner
optimiser.  Key shared hyperparameters are listed in Table~\ref{tab:metarl_hparams}.

\begin{table}[h]
\centering
\caption{Shared meta-learning hyperparameters (Meta-RL).}
\label{tab:metarl_hparams}
\small
\begin{tabular}{lc}
\toprule
Hyperparameter & Value \\
\midrule
Meta-training iterations              & 500 \\
Meta-batch size $B$                   & 20 tasks \\
Validation batch size                 & 20 tasks (uniform) \\
Inner adaptation steps                & 1 \\
Fast batch size (trajectories / task) & 20 \\
Max KL divergence                     & 0.01 \\
Discount factor                       & 0.99 \\
GAE $\lambda$                         & 1.0 \\
Random seeds                          & 10 \\
\bottomrule
\end{tabular}
\end{table}

\paragraph{Evaluation protocol.}
All methods are evaluated on a held-out validation set of 20 tasks sampled uniformly
from $p(\bm{\tau})$ at each iteration; validation episodes are excluded from the
environment interaction count.

\paragraph{ICRPM Configuration for Meta-RL}

ICRPM uses Transformer backbone RPM that learns a predictive distribution
$p(\ell\vert\boldsymbol{\tau},C_t)$ conditioned on the latest context set
$C_t = \{(\boldsymbol{\tau}_{t,i},\ell_t(\boldsymbol{\tau}_{t,i}))\}_{i=1}^{B}$.

\paragraph{Architecture Details.}
The encoder maps context pairs
$\{(\boldsymbol{\tau}_i,\ell_i)\}_{i\in C_t}$ through a depth-4 MLP embedding,
a Transformer encoder (self-attention over context), and a cross-attention module
where each target task $\boldsymbol{\tau}$ attends to the encoded context.
The decoder outputs Gaussian parameters $(\mu,\sigma)$ with
$\sigma = 0.1+0.9\cdot\mathrm{softplus}(\cdot)$.
The transformer-based ICRPM is parameter-matched to the vanilla RPM for a fair comparison
($\approx$0.65\,M parameters).

\paragraph{Training objective.}
At each iteration $t$, one gradient step is taken on the weighted NLL objective
(Eq.~(\ref{eq:icrpm_obj}) in the main text):
\begin{equation}
  \min_{\boldsymbol{\phi}}\;\mathcal{J}(\boldsymbol{\phi})
  = -\sum_{k=1}^{W}\gamma^{k}
    \left(\frac{1}{B}\sum_{i=1}^{B}
      \ln p_{\boldsymbol{\phi}}\!\left(
        \ell_{t-k+1}(\bm{\tau}_{t-k+1,i})
        \vert
        \boldsymbol{\tau}_{t-k+1,i},\,H_{t-k}
      \right)
    \right),
  \label{eq:icrpm_obj_app}
\end{equation}
where $H_{t-k} = \{(\boldsymbol{\tau}_{t-k, i},\ell_{t-k, i})\}_{i=1}^{B}$
is the per-step history at time $t-k$, $W=3$ is the sliding-window size,
$\gamma=0.9$ is the temporal decay, and Adam (lr\,$=5\times10^{-3}$) is used.

\paragraph{Task selection.}
This retains the predictive selection setup in MPTS~\citep{wang2025modelpredictivetasksampling}. At each
iteration, $1.5B$ candidate tasks are drawn from $p(\tau)$. ICRPM predicts the
mean risk $\hat y_i=\mathbb E_{p_\phi(y|\hat\tau_i,H_t)}[y]$ for each candidate.
For robust task sampling, we set
$c^*=\max_i \hat y_i$ and select the $B$ candidates with the largest score
$-\|\hat y_i-c^*\|_2^2$, which is equivalent to selecting the top-$B$ candidates
by predicted risk in the scalar-risk case.

\subsection{RLVR Countdown Experiment}
\label{app:rlvr}

\paragraph{Setup}

We use the \textbf{Countdown} mathematical reasoning task (20k training instances,
Countdown-3to4) with \textbf{Qwen3-4B-Base} as the policy backbone
$\pi_{\boldsymbol{\theta}}$, trained via \textbf{GRPO+} inside the \textbf{veRL}
framework (v0.3.1) on 4$\times$A100-80\,GB GPUs.  Key training hyperparameters
are listed in Table~\ref{tab:rlvr_hparams}.

\begin{table}[h]
\centering
\caption{RLVR training hyperparameters.}
\label{tab:rlvr_hparams}
\small
\begin{tabular}{lc}
\toprule
Hyperparameter & Value \\
\midrule
Base model                         & Qwen3-4B-Base \\
Training steps                     & 100 \\
Prompt batch size $B$              & 128 (4-GPU) \\
Responses per prompt $K$           & 8 \\
Max response length                & 1,024 tokens \\
Policy optimiser                   & AdamW, lr $1\times10^{-6}$ \\
Entropy coefficient                & 0.001 \\
KL coefficient                     & 0.0 \\
PPO clip ratio (low\,/\,high)      & 0.2\,/\,0.28 \\
\bottomrule
\end{tabular}
\end{table}

The reward function follows the format-aware Countdown setting:
$r=1$ (correct), $r=0.1$ (incorrect but well-formatted), $r=0$ (otherwise).
Evaluation reports pass@1 accuracy on held-out CD34 and the harder CD4 benchmark.

\paragraph{ICRPM Configuration for RLVR}

ICRPM replaces the prompt-difficulty predictor (RPM).
At each training step, rollouts are generated by vLLM using the current actor.
After reward computation, prompt embeddings
$\boldsymbol{\tau}\in\mathbb{R}^{1024}$ are extracted from the current actor's
last hidden states and paired with observed solve rates to form the context
$H_t$, , where the solve rate is the fraction of generated responses that exactly solve the Countdown instance. These task representations are not frozen; their policy-induced drift is
treated as part of the non-stationary task-risk context modeled by ICRPM.
The ICRPM is trained with the objective in Eq.~\eqref{eq:icrpm_obj_app}, using
$W=3$, $\gamma=0.9$, and 20 gradient steps per training step, matching the
vanilla RPM. In RLVR, ICRPM is scaled to match the vanilla RPM parameter count
($\approx$42.8\,M). The task-sampling ratio is 80\%.

\subsection{Domain Randomization Experiments}
\label{app:dr}

For zero-shot DR we follow the MPTS protocol with \textbf{TD3} as the policy
backbone on \textbf{Ergo-Reacher} and
\textbf{Lunar-Lander}~\citep{mehta2020active}.
Each task $\boldsymbol{\tau}$ is an environment instance induced by a continuous
domain-randomisation identifier; the risk
$\ell(\boldsymbol{\tau}) = L(\boldsymbol{\theta}_t,\boldsymbol{\tau})$ is the
negative episodic return without inner-loop adaptation.
The same task-batch size $B$, candidate-pool size $\hat{B}$, and baseline protocol
as Meta-RL are used. ICRPM uses a parameter-matched Transformer architecture and the same temporal augmentation settings ($W=3,\gamma=0.9$).

\subsection{Few-Shot Sinusoid Regression}
\label{app:sinusoid}

Each task is specified by $\boldsymbol{\tau}=(a,b)$ with
$f_{\boldsymbol{\tau}}(x)=a\sin(x-b)$, $a\in[0.1,5.0]$, $b\in[0,\pi]$.
The meta-learner is MAML and task risk is query-set MSE after adaptation.
We report average MSE and CVaR$_\alpha$ (Eq.~\eqref{eq:cvar_tail_dist_app})
with $\alpha\in\{0.5,0.7,0.9\}$, along with SRC (Eq.~\eqref{eq:src_app}) throughout
training. ICRPM is parameter-matched to the vanilla RPM and uses the same temporal
augmentation settings ($W=3$, $\gamma=0.9$).

\subsection{Ablation Studies}
\label{app:ablation}

Ablations are conducted on \textbf{Lunar-Lander DR} with 10 independent runs each. The ICRPM parameter count is equal to the vanilla 
RPM across all ablation studies for fair comparison.
We vary four design dimensions: acquisition principle, RPM backbone, history augmentation, and the window/decay hyperparameters.

\begin{enumerate}
  \item \textbf{Acquisition principle.}
    We compare worst-case selection by predicted risk $\mu(\ell)$ with a diversity-aware
    variant that combines $\mu(\ell)$ with a set-level diversity bonus.
    Both variants are evaluated under identical optimisation budgets and compared with
    PDTS~\citep{qu2025fast}. This ablation tests whether the improvement comes from the
    risk predictor itself rather than a specific acquisition heuristic.

  \item \textbf{RPM backbone and history augmentation.}
    We compare the default Transformer backbone with a parameter-matched MLP that
    retains the same context-conditioned look-ahead objective in Eq.~\eqref{eq:icrpm_obj_app}.
    To isolate the role of history reuse from look-ahead prediction, we also evaluate
    \emph{MPTS-augment}, which augments vanilla MPTS~\citep{wang2025modelpredictivetasksampling} with the same three-step history window
    and temporal decay $\gamma=0.9$ used by ICRPM, but without the context-conditioned
    one-step extrapolative objective. This comparison separates the effect of the backbone,
    the sliding-window augmentation, and the proposed look-ahead formulation.

  \item \textbf{Look-back window $W$.}
    We vary $W\in\{1,2,3,8\}$ with $\gamma=0.9$ fixed. Smaller windows use fresher
    histories but provide less task coverage, while larger windows improve sample reuse
    but can introduce stale context from earlier policy states. We set $W=3$ as the default
    trade-off between coverage and freshness.

  \item \textbf{Temporal decay factor $\gamma$.}
    We vary $\gamma\in\{0.5,0.7,0.9,1.0\}$ with the default window size fixed. Smaller
    values emphasize only the most recent batches, whereas $\gamma=1.0$ weights stale and
    fresh histories equally. Among the tested values, $\gamma=0.9$ provides a stable balance by mildly discounting older histories, and we therefore use it as the default.
\end{enumerate}

All ablation settings report CVaR$_{0.9}$ return, average return, and SRC
(Eq.~\eqref{eq:src_app}) on the task-distribution validation set. The results show that
ICRPM remains effective across acquisition rules and backbone choices, while MPTS-augment
does not match ICRPM under the same history-window and decay configuration. This indicates
that the gains are not explained by history reuse alone, but mainly come from
context-conditioned one-step look-ahead risk prediction.

\subsection{Compute resources.}
\label{appendix:compute}
Table~\ref{tab:compute} reports approximate single-run wall-clock times, where each entry
corresponds to one method on one benchmark/environment with one random seed, including
training and periodic evaluation but excluding preliminary debugging runs. For Sinusoid and
continuous-control experiments, ERM~\citep{vapnik1998statistical}, MPTS~\citep{wang2025modelpredictivetasksampling}, GDRM~\citep{sagawa2019distributionally}, OHTM~\citep{kumar2023effect}, and ICRPM have comparable
runtime, while DRM is about 30--40\% slower due to its additional importance-weight
computation. For RLVR Countdown, Uniform, PCL~\citep{gao2026prompt}, MoPPS~\citep{qu2026can}, and ICRPM have comparable
runtime, while GRESO~\citep{zheng2025act} is about 60\% slower due to additional rollout-selection overhead.
\begin{table}[h]
\centering
\small
\caption{Approximate single-run compute resources.}
\label{tab:compute}
\begin{tabular}{llll}
\toprule
Experiment & Hardware & Common & Slower \\
\midrule
Sinusoid & RTX 4090 24GB & $\sim$12.2 min & DRM: $\sim$16.4 min \\
DR & RTX 4090 24GB & $\sim$3.0 h & DRM: $\sim$4.2 h \\
Meta-RL & A10 24GB & $\sim$8.7 h & DRM: $\sim$11.6 h \\
RLVR & 4$\times$A100 80GB & $\sim$11.1 h & GRESO: $\sim$17.8 h \\
\bottomrule
\end{tabular}
\end{table}

\subsection{Assets, Licenses, and Terms of Use.}
All experiments use publicly available research assets. For LLM post-training, we use Qwen3-4B-Base as the policy backbone, and implement training and inference with veRL and vLLM. These model and software assets are released under the Apache License 2.0. For continuous-control experiments, we use OpenAI Gym with MuJoCo-based environments through the legacy \texttt{mujoco-py} interface. OpenAI Gym and \texttt{mujoco-py} are released under permissive open-source licenses, while the MuJoCo simulator is used subject to its applicable license terms; we do not redistribute MuJoCo binaries, license keys, or third-party assets. Our TD3 and MAML implementations are adapted from or benchmarked against publicly available research codebases released under permissive licenses. For Countdown, we use public arithmetic-reasoning datasets from Hugging Face for research evaluation only, do not redistribute raw dataset files, and provide sufficient preprocessing and evaluation details to support reproducibility. We comply with the terms of use of all upstream assets and retain the corresponding license and copyright notices where applicable.

\section{Limitations}
\label{app:limitations}

ICRPM is proposed as a context-conditioned extrapolative RPM for improving risk approximation in MPTS-style task sampling. Several aspects remain open for future investigation.

\begin{enumerate}
    \item \textbf{History-based extrapolation.}
    ICRPM relies on recent optimization histories to infer next-step task risk.
    This is effective when recent histories capture local risk drift, but may be less reliable under abrupt policy updates, highly noisy rewards, or task regions poorly covered by the history buffer.

    \item \textbf{Scope of validation and analysis.}
    Our theory uses standard boundedness and smoothness assumptions to characterize temporal mismatch, and our experiments cover regression, robust continuous control, and RLVR-based LLM post-training.
    Extending the analysis and evaluation to larger models, longer horizons, and broader task distributions remains an important direction for future work.
\end{enumerate}

\section{Broader Impacts}
ICRPM improves sample-efficient task selection by reducing temporal-lag bias in online risk prediction. This can lower interaction and compute costs and improve robustness on difficult or tail tasks in simulated control and verifiable-reward reasoning settings. However, more efficient task selection may also reduce the cost of RL-based post-training for capable models, including applications with potential misuse. Moreover, learned risk predictors may amplify blind spots if the task distribution is biased or incomplete. We therefore recommend using such methods with broad task coverage, safety-oriented stress tests, and task-distribution audits before any real-world deployment. Our experiments use simulated environments and algorithmic reasoning benchmarks only, and do not involve human subjects, private user data, or real-world deployment.

\end{document}